\newtheorem{theorem}{Theorem}[section]
\newtheorem{definition}{Definition}[section]
\newcommand{\header}[1]{\vspace{1mm}\noindent\textbf{#1}.}
\newcommand{\headerl}[1]{\vspace{1mm}\noindent\textit{#1}.}
\title{Improving Retrieval-Augmented~Large Language Models via Data Importance Learning}
\author{
Xiaozhong Lyu$^{1}$ \quad Stefan Grafberger$^{2}$ \quad Samantha Biegel$^{1}$ \quad Shaopeng Wei$^{1}$ \\ \textbf{ Meng Cao$^{3}$ \quad Sebastian Schelter$^{2}$ \quad Ce Zhang$^{1}$} \\ $^{1}$ETH Zürich $^{2}$ University of Amsterdam  $^{3}$Apple
}
\begin{document}

\maketitle

\newcommand\datapoint{data point}
\newcommand\datapoints{data points}
\newcommand\gptthree{GPT-3.5}

\begin{abstract}
 Retrieval augmentation enables large language models to take advantage of external knowledge, for example on tasks like question answering and data imputation. However, the performance of such retrieval-augmented models is limited by the data quality of their underlying retrieval corpus. In this paper, we propose an algorithm based on \textit{multilinear extension} for evaluating the data importance of retrieved \datapoint{}s. 
 There are \textit{exponentially} many terms
 in the multilinear extension, and one 
 key contribution of this paper is a 
 \textit{polynomial time} algorithm that 
 computes \textit{exactly}, given a retrieval-augmented model with an additive utility function and a validation set, 
 the data importance of \datapoint{}s in the retrieval corpus using the multilinear extension of the model's utility function. 
 We further proposed an even more efficient $(\epsilon, \delta)$-approximation algorithm.
 Our experimental results illustrate that we can enhance the performance of large language models by only pruning or reweighting the retrieval corpus, without requiring further training. For some tasks, this even allows a small model (e.g., GPT-JT), augmented with a search engine API, to outperform GPT-3.5 (without retrieval augmentation). 
 Moreover, we show that weights based on multilinear extension can be computed efficiently in practice (e.g., in less than ten minutes for a corpus with 100 million elements).

\end{abstract}


\section{Introduction}

Large language models (LLMs) consisting of neural networks with billions of parameters and trained on vast quantities of unlabelled text are the basis of unprecented progress in natural language processing tasks~\cite{devlin2018bert,radford2018improving,2020t5,lewis2019bart}. With zero-shot or few-shot prompting, LLMs can be adopted for a wide range of diverse tasks, such as question answering~\cite{liang2022holistic} summarization~\cite{liang2022holistic, bhaskar2022zero} and data imputation~\cite{narayancan2022}.

\header{Drawbacks of large language models} LLMs, however, have two widely acknowledged disadvantages~\cite{alt2019fine, cost2020}. Firstly, despite their impressive capabilities, LLMs actually perform badly on tail entities~\cite{alt2019fine}, which they have not seen at training time or cannot remember due to limitations of the network capacity. The second drawback is that with the ever-growing number of model parameters, training, and fine-tuning costs are exploding as well. As a rough estimate, it costs \$80k - \$1.6m to train a 1.5 billion parameter language model~\cite{strubell2019energy,cost2020,yuan2022decentralized}. This makes it difficult to leverage LLMs for tasks that require regularly updated data or that regularly need to remove privacy-sensitive or copyright-protected data~\cite{bommasani2021opportunities}.

\header{Retrieval-augmented models} To address such problems, retrieval-augmented (RAG) models have recently been proposed~\cite{karpukhin2020dense, lewis2020retrieval, guu2020retrieval}. A typical retrieval-augmented model consists of two parts, a retriever $f_{ret}$ and a generator $f_{gen}$. Given a retrieval corpus $\mathcal{D}_{ret} = \{d_1, \cdots, d_M\}$, the retriever $f_{ret}$ retrieves $K$ \datapoints{} for an input $x_i$ as $f_{ret}(x_i, \mathcal{D}_{ret}) = \{d_{\alpha_{1}}, d_{\alpha_{2}}, ..., d_{\alpha_{K}}\}$. Here, $\alpha_k$ denotes the rank of each \datapoint{} in the retrieval corpus assigned by the retriever. The generator $f_{gen}$ then generates its prediction based on the input and the retrieved \datapoints{} as evidence $f_{gen}(x_i, f_{ret}(x_i, \mathcal{D}_{ret}))$. Recent research indicates that incorporating external knowledge into LLMs improves their performance for various tasks and allows them to easily adapt to new knowledge~\cite{siriwardhana2022improving,zamani2022retrieval}. 

\begin{figure}
  \centering
  \includegraphics[width=\textwidth]{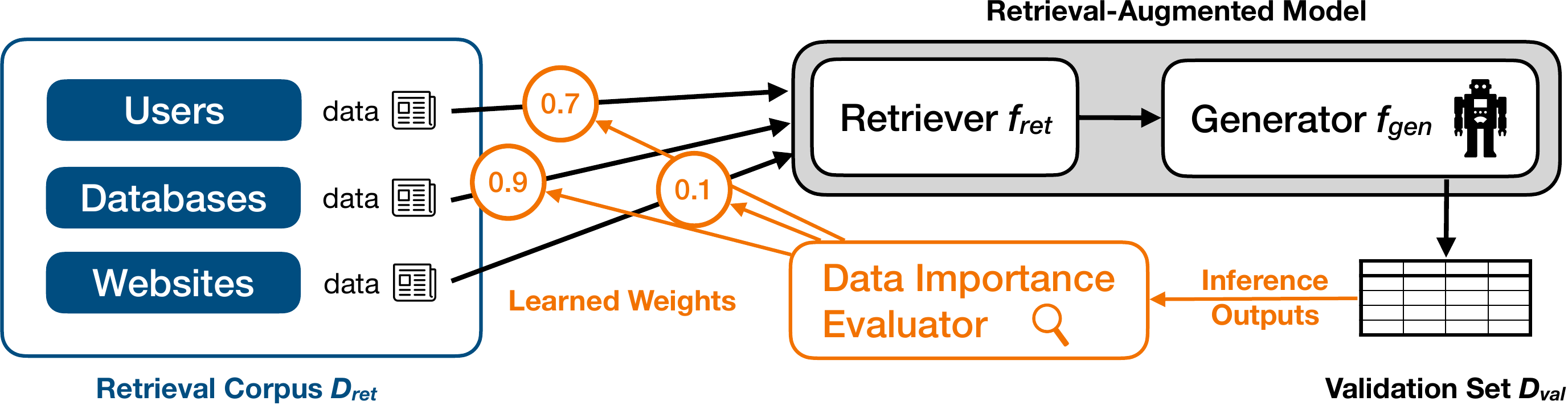}
  \caption{Data importance evaluation for retrieval-augmented models: The retriever $f_{ret}$ retrieves $K$ data points from the retrieval corpus $\mathcal{D}_{ret}$ and provides them to the answer generator $f_{gen}$. Our data importance evaluator learns weights for the data sources in the retrieval corpus based on the performance on a validation set $\mathcal{D}_{val}$. These weights are subsequently used to reweight or prune the data sources, and improve the model's performance without further training. 
  }
  \label{fig:workflow}
\end{figure}

\header{Impact of data quality on retrieval-augmented LLMs} The performance of retrieval-augmented models is highly limited by the quality of the retrieved \datapoints{}. For example, GPT-3 is able to give the correct answer ``\textbf{\textit{Frank Herbert}}'' to the question \textit{``Who is the author of Old Rambling House?''} with the help of a retrieved Wikipedia page ~\cite{enwiki:1140483446}, which contains the sentence \textit{"Old Rambling House is a short story by American science fiction author \textbf{Frank Herbert}."} However, it would with a high probability give the wrong answer if the retrieved page contained incorrect text such as \textit{``Old Rambling House is a short story by American science fiction author \textbf{J. R. R. Tolkien}.''}
Retrieval corpora are rarely clean in reality (especially if the underlying data comes from the web), and the origin of noise and errors in the data is difficult to track down~\cite{frenay2013classification, cothey2004web}. For example, according to recent estimates, $8.0\%$ to $38.5\%$ of labels in real-world datasets are corrupted~\cite{song2022learning}. In the domain of natural language processing, which relies on raw text, the rapidly growing number of use cases and an increasing amount of text have especially exacerbated data quality issues~\cite{cothey2004web}.

\header{Learning the data importance of retrieval sources} Given this data quality problem, we propose to improve retrieval-augmented models by learning the data importance of retrieval sources. Let $U\left( \cdot \right)$ be the utility function of a retrieval-augmented model with a validation set $\mathcal{D}_{val} = \{x_1, x_2, ..., x_N\}$, and let $\mathcal{D}_{ret} = \{d_1, \cdots, d_M\}$ be the underlying retrieval corpus of $M$ \datapoints{}. The performance of the model can be written as:
\begin{equation}
U(f_{gen}, f_{ret}, \mathcal{D}_{val}, \mathcal{D}_{ret}) := \sum_{x_i \subseteq \mathcal{D}_{val}}
     U\left( f_{gen}(x_i, f_{ret}(x_i, \mathcal{D}_{ret})) \right)      
\end{equation}
Our goal to is find a subset $\mathcal{S}$ of the retrieval corpus $\mathcal{D}_{ret}$ that maximizes the utility function $U(f_{gen}, f_{ret}, \mathcal{D}_{val}, \mathcal{S})$. We leave out $f_{gen}$, $f_{ret}$, and $\mathcal{D}_{val}$ from the notation and use $U(\mathcal{S})$ for readability. It is hard to solve this combinatorial optimization problem since it requires enumerating exponentially many possible subsets $\mathcal{S}$. One natural way is to change this problem to an optimization problem on continuous functions. Therefore, we define the \textit{multilinear extension} of the utility function as:
\begin{equation}
\tilde{U}(w_1,\cdots,w_M) := \sum_{\mathcal{S} \subseteq \mathcal{D}_{ret}}
     U\left( \mathcal{S} \right)
       \underbrace{\prod_{d_i \in \mathcal{S}} w_i
       \prod_{d_i \not \in \mathcal{S}} (1 - w_i)}_{P[\mathcal{S}]}
\label{eq:main}
\end{equation}
Here,  $P[\mathcal{S}]$ denotes the probability of the sampled retrieval corpus $\mathcal{S}\subseteq \mathcal{D}_{ret}$ based on the weights $w_1,\cdots,w_M$. Our goal is to find the optimal weights $w_1,\cdots,w_M$ that maximize the multilinear extension of the utility function:
\begin{equation}
    \max_{w_1,\cdots,w_M \in [0, 1]} \tilde{U}(w_1,\cdots,w_M)
\end{equation}
The optimal weights can be found with textbook optimization methods like gradient descent. This, however, requires enumerating exponentially many sample sets, making the problem infeasible in practice. We tackle this challenge with the following main contributions of this paper: 

\begin{itemize}[leftmargin=*]

   \item We present an efficient algorithm to compute weights for a large family (but not all) of retrieval-augmented models with additive utility functions. Our algorithm has \textit{polynomial} time complexity and does not depend on the retrieval corpus size~(Sections~\ref{Exact_MLE}~\&~\ref{ep_MLE}), even
   given that there are \textit{exponential} many terms in Equation~\ref{eq:main}.

    \item We introduce an efficient estimation algorithm to compute the $(\epsilon,\delta)$-approximation of weights for a large family of retrieval-augmented models~(\Cref{ed_MLE}).

    \item We experimentally demonstrate that retrieval augmentation and data evaluation based on multilinear extension improve the performance of large language models in question answering and data imputation tasks. The experiments demonstrate that with external retrieval knowledge, small language
    models can yield comparable performance to large language models. Furthermore, our evaluation shows that weights based on multilinear extension can identify noisy data and help models adapt to new sources of knowledge~(\Cref{experiment3}).
    
    \item Our implementation of the algorithm illustrates that weights based on multilinear extension can be calculated very fast in practice, even for a large corpus with 100 million \datapoints{}~(\Cref{experiment4}). 

    \item We provide the source code of our implementation and experiments under \url{https://github.com/amsterdata/ragbooster}.
\end{itemize}    

\section{Algorithms for Deriving Gradients}

We can find the optimal weights for the multilinear extension of the utility function via computing the gradient of a particular weight $w_i$ based on a validation set $\mathcal{D}_{val}$:

\begin{equation}
\label{gradient}
\begin{aligned}
\frac{\partial \tilde{U}}{\partial w_i}
 & =
\sum_{\mathcal{S} \subseteq \mathcal{D}_{ret}\backslash d_i}{\left( 
      U(\mathcal{S} \cup \{d_i\})
     - U(\mathcal{S}) \right)
   \cdot P[\mathcal{S}]} \\
& =
\sum_{x_{val} \in \mathcal{D}_{val}} \frac{1}{|\mathcal{D}_{val}|} \cdot \sum_{\mathcal{S} \subseteq \mathcal{D}_{ret}\backslash d_i}
   \underbrace{\left( 
      U_{x_{val}}(\mathcal{S} \cup \{d_i\})
     - U_{x_{val}}(\mathcal{S}) \right)
    \cdot P[\mathcal{S}]}_{G(x_{val},\ w_i)}\\
& = \frac{1}{|\mathcal{D}_{val}|} \cdot \sum_{x_{val} \in \mathcal{D}_{val}}
G(x_{val},\ w_i)
\end{aligned}
\end{equation}

\header{Infeasability of a naive implementation} However, computing the gradients in~\Cref{gradient} is challenging. A naive implementation would have to {\em enumerate all possible subsets} $\mathcal{S}$ for each validation tuple $x_{val} \in \mathcal{D}_{val}$ to compute the contribution of this subset $\mathcal{S}$ to the gradient value $G(x_{val},\ w_i)$. Such a naive implementation is infeasible in practice due to its inherent exponential time complexity.

\header{Efficient weight computation for retrieval-augmented models} As discussed before, we focus on a specific family of machine learning models, called retrieval-augmented (RAG) models. Retrieval-augmented models benefit from locality: the predictions of retrieval-augmented models for an input sample are only determined by the Top-$K$ closest  \datapoint{}s in the retrieval corpus and the answer generator. Combined with additive utility functions (which are common for both classical KNN and state-of-the-art RAG models), this allows us to efficiently compute exact gradients within polynomial time complexity
(\Cref{Exact_MLE} and \Cref{sec:impl}). In \Cref{ep_MLE}, we show that we only have to consider a small subset of \datapoints{} for each validation tuple and that the time complexity only depends on $K$ instead of the retrieval corpus size $M$ if we apply an $\epsilon$-approximation. Finally, we propose an $(\epsilon,\delta)$-approximation algorithm in \Cref{ed_MLE} to calculate gradients for general utility functions.

\subsection{Exact Gradient Calculation for Models with an Additive Utility Function}
\label{Exact_MLE}

A textbook $K$-nearest neighbor classifier and many state-of-the-art retrieval-augmented models~\cite{lewis2020retrieval} can be viewed as models with additive utility functions. In this section, we present a polynomial time complexity algorithm to compute the exact gradient of the weights of the \textit{multilinear extension} of the utility function. We follow existing work~\cite{jia2019efficient} to define the additive utility function of a retrieval-augmented model as:
\begin{equation}
U_{x_{val}}(\mathcal{S}) = \frac{1}{K} \sum_{k=1}^{\min{(K,|\mathcal{S}|)}}{U_{x_{val}}(f_{gen}(d_{\alpha_k^{x_{val}}(\mathcal{S})}))}
\end{equation}
Here, $\alpha_k^{x_{val}}(\mathcal{S})$ represents the index of the \datapoint{}, which is the $k$th closest to $x_{val}$ among all the \datapoint{s} retrieved by $f_{ret}$ from $\mathcal{S}$. From now on, we abbreviate $\alpha_k^{x_{val}}(\mathcal{S})$ to $\alpha_k$. $U_{x_{val}}(f_{gen}(d_{\alpha_k^{x_{val}}(\mathcal{S})}))$ denotes the utility function for the output generated based on the validation tuple $x_{val}$ and the single \datapoint{} $d_{\alpha_k^{x_{val}}}$. We assume that the possible values of $U\left(\cdot\right)$ function are within a countable finite set $\mathcal{V}$, where $|\mathcal{V}| = V$, and leave out $f_{gen}$ from the notation for readability in the following. In this scenario, we can provide an algorithm with PTIME time complexity in~\Cref{ap_Exact_MLE}. The overall time complexity of the algorithm is $\mathcal{O}{\left(N\cdot(M \log{M} + M K^2 + M K V )\right)}$

\subsection{$\epsilon$-approximation Algorithm for Calculating Exact Gradient Values}
\label{ep_MLE}

The overall time complexity for computing gradients for models with an additive utility function is $\mathcal{O}{\left(N\cdot(M \log{M} + M K^2 + M K V )\right)}$. 
In this section, we show that if we are allowed to do $\epsilon$-approximations, we can significantly speed up the calculation of the gradients $\frac{\partial \tilde{U}}{\partial w_i}$. We will only introduce the main idea here, leaving the details in~\Cref{ap_ep_MLE}.

\begin{theorem}
\label{computeep}
If we calculate the $\epsilon$-approximation $\hat{G}(x_{val},\ w_i)$ for the each $G(x_{val},\ w_i)$, we can get the $\epsilon$-approximation for $\frac{\partial \tilde{U}}{\partial w_i}$ as the average of $\hat{G}(x_{val},\ w_i)$.
\end{theorem}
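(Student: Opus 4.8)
The plan is to exploit the fact that, by the final line of~\Cref{gradient}, the full gradient $\frac{\partial \tilde{U}}{\partial w_i}$ is nothing more than the uniform average of the per-validation-tuple quantities $G(x_{val}, w_i)$ over $x_{val} \in \mathcal{D}_{val}$. Since averaging is a linear operation, the theorem reduces to the claim that an $\epsilon$-approximation is preserved under convex (here, uniform) averaging. I would therefore not attempt to re-derive anything about the individual $\hat{G}(x_{val}, w_i)$; I would instead treat the per-tuple guarantee established in~\Cref{ap_ep_MLE} as a black box and reason only about how the guarantees aggregate.

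First I would fix the notion of $\epsilon$-approximation attached to each $\hat{G}(x_{val}, w_i)$, taking the additive form $|\hat{G}(x_{val}, w_i) - G(x_{val}, w_i)| \leq \epsilon$ (the multiplicative variant is handled identically, provided the $G$ values are sign-consistent). I would then define the candidate estimator of the gradient as the average $\frac{1}{|\mathcal{D}_{val}|} \sum_{x_{val}} \hat{G}(x_{val}, w_i)$, deliberately mirroring the exact expression so that the two differ only term-by-term inside the sum.

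The key step is a single application of the triangle inequality:
\begin{equation*}
\left| \frac{1}{|\mathcal{D}_{val}|}\sum_{x_{val}}\hat{G}(x_{val}, w_i) - \frac{\partial \tilde{U}}{\partial w_i} \right|
\leq \frac{1}{|\mathcal{D}_{val}|}\sum_{x_{val}} \left| \hat{G}(x_{val}, w_i) - G(x_{val}, w_i) \right|
\leq \epsilon,
\end{equation*}
where the first inequality combines linearity of the average with the triangle inequality, and the second substitutes the per-tuple bound into each summand (the $\frac{1}{|\mathcal{D}_{val}|}$ factor then cancels the count of terms). This shows directly that the average of the $\hat{G}$ values is itself an $\epsilon$-approximation of the gradient.

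I do not anticipate a genuine obstacle: the statement is essentially that uniform averaging never inflates a uniform additive error. The only point requiring care is to confirm that the approximation guarantee proved in~\Cref{ap_ep_MLE} is of a form that is closed under convex combination, namely additive, or multiplicative with $G$ of a fixed sign. Because each $G(x_{val}, w_i)$ is a probability-weighted sum of utility \emph{differences} and can in principle be negative, I would lean on the additive interpretation to avoid sign issues; once that interpretation is made explicit, the triangle-inequality argument above closes the proof.
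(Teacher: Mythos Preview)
Your proposal is correct and matches the paper's proof essentially line for line: the paper defines $\hat{g}_i = \frac{1}{|\mathcal{D}_{val}|}\sum_{x_{val}}\hat{G}(x_{val},w_i)$, substitutes \Cref{gradient} for $\frac{\partial \tilde{U}}{\partial w_i}$, and then applies exactly the triangle-inequality bound you wrote to obtain $\frac{1}{|\mathcal{D}_{val}|}\cdot|\mathcal{D}_{val}|\cdot\epsilon = \epsilon$. Your side remark about additive versus multiplicative interpretation is not needed, since the paper works purely with the additive bound.
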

\begin{proof}See \Cref{pcomputeep}.\end{proof}

Our next step is to detail how to compute the $\epsilon$-approximation for $G(x_{val},\ w_i)$.
One observation is that the absolute value $G(x_{val},\ w_i)$ is bounded by the sum of the probabilities of the \datapoints{} $d_i$ in the $K$-nearest neighbor set of $x_{val}$. Notice that for a \datapoint{} with a lower rank, the probability of it being in the $K$-nearest neighbor set is smaller. Therefore we can define the boundary point $d_b$ of the retrieval corpus.  

\begin{definition}{(\textbf{Boundary Point})}
Given a validation tuple $x_{val}$ and the retrieval corpus $\mathcal{D}_{ret} = \{d_1, \cdots, d_M\}$ ranked with respect to $x_{val}$, the \textbf{boundary point $d_b$} is the \datapoint{} with the highest rank in the sorted corpus such that any \datapoint{} that has a lower rank than $d_b$ has a probability less than $\epsilon$ to be in the $K$-nearest neighbor set of $x_{val}$. 
\end{definition}
In practice, after we rank the corpus with respect to a validation tuple, we can use binary search to find the boundary point. After we find this boundary point $d_b$, we can use 0 as the $\epsilon$-approximation for the gradient for \datapoint{s} with a lower rank as $\hat{G}(x_{val},\ w_i) = 0$ for $i \in \{b, ..., M\}$. It is because the probability of those \datapoint{s} being in the $K$-nearest neighbor set is less than $\epsilon$. In the following, we will show the approximation for \datapoint{s} with a higher rank. 


\begin{theorem}
\label{discard}
Given the validation tuple $x_{val}$, the retrieval corpus $\mathcal{D}_{ret} = \{d_1, ..., d_M\}$, the boundary point $d_b$, and the weights $W = \{w_1, ..., w_M\}$, if we have an algorithm $\mathcal{A}$ to calculate the $G(x_{val}, w_i) = \mathcal{A}(x_{val},\mathcal{D}_{ret}, W )$, then $\hat{G}(x_{val},\ w_i) = \mathcal{A}(x_{val},\{d_1, ..., d_b\},\{w_1, ..., w_b\})$ is the $\epsilon$-approximation for $G(x_{val}, w_i)$.
\end{theorem}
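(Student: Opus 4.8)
The plan is to compare the quantity that $\mathcal{A}$ returns on the full corpus,
\[
G(x_{val},w_i)=\sum_{\mathcal{S}\subseteq\mathcal{D}_{ret}\backslash d_i}\Delta(\mathcal{S})\,P[\mathcal{S}],\qquad \Delta(\mathcal{S}):=U_{x_{val}}(\mathcal{S}\cup\{d_i\})-U_{x_{val}}(\mathcal{S}),
\]
against the value on the truncated corpus, $\hat{G}(x_{val},w_i)=\mathcal{A}(x_{val},\{d_1,\dots,d_b\},\{w_1,\dots,w_b\})=\sum_{\mathcal{T}\subseteq\{d_1,\dots,d_b\}\backslash d_i}\Delta(\mathcal{T})\,P'[\mathcal{T}]$, where $P'$ uses only the first $b$ weights, and to show that their difference is at most $\epsilon$ for the relevant ranks $i\le b$. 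First I would record two consequences of the additive, locality-based utility: $\Delta(\mathcal{S})$ vanishes unless $d_i$ lies in the Top-$K$ neighbor set of $\mathcal{S}\cup\{d_i\}$, and whenever it does not vanish it is bounded by the range of $\mathcal{V}$ divided by $K$, since adding $d_i$ inserts a single element into the sorted Top-$K$ list and evicts at most one. This is exactly the observation, stated before the theorem, that $|G(x_{val},w_i)|$ is controlled by the probability that $d_i$ enters the Top-$K$ set.

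Next I would exploit the independence of the Bernoulli weights to factor the sample space. Every $\mathcal{S}\subseteq\mathcal{D}_{ret}\backslash d_i$ splits uniquely as $\mathcal{S}=\mathcal{T}\cup\mathcal{R}$ with $\mathcal{T}\subseteq\{d_1,\dots,d_b\}\backslash d_i$ and $\mathcal{R}\subseteq\{d_{b+1},\dots,d_M\}$, and $P[\mathcal{S}]=P'[\mathcal{T}]\,P''[\mathcal{R}]$, where $P''$ ranges over the tail weights with $\sum_{\mathcal{R}}P''[\mathcal{R}]=1$. Substituting and using this normalization gives
\[
G(x_{val},w_i)-\hat{G}(x_{val},w_i)=\sum_{\mathcal{T}}P'[\mathcal{T}]\sum_{\mathcal{R}}P''[\mathcal{R}]\bigl(\Delta(\mathcal{T}\cup\mathcal{R})-\Delta(\mathcal{T})\bigr).
\]
The crucial structural step is then to show that the inner difference vanishes on most configurations. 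Because $i\le b$, every element of $\mathcal{R}$ is ranked below $d_i$, so the tail set $\mathcal{R}$ never changes how many sampled points outrank $d_i$; hence $d_i$ enters the Top-$K$ of $\mathcal{T}\cup\{d_i\}$ exactly when it enters the Top-$K$ of $\mathcal{T}\cup\mathcal{R}\cup\{d_i\}$, and the two marginals can differ only through the identity of the single evicted $K$-th neighbor. That neighbor changes only when some element of $\mathcal{R}$ itself belongs to the Top-$K$ set, i.e.\ on the \emph{spill-over} event that the Top-$K$ neighborhood reaches past the boundary point $d_b$.

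Finally I would bound the probability of this spill-over event. The event requires fewer than $K$ of the sampled points to have rank at most $b$, which, by the monotonicity of Top-$K$ membership in the rank (a lower-ranked point is less likely to be pulled into the Top-$K$), is dominated by the probability that $d_{b+1}$ — the first point past the boundary — enters the Top-$K$ set, and this is below $\epsilon$ by the definition of $d_b$. Combining the bounded per-configuration marginal with this probability bound yields $|G(x_{val},w_i)-\hat{G}(x_{val},w_i)|\le\epsilon$ after absorbing the utility-range constant into $\epsilon$; together with the already-established case $\hat{G}=0$ for ranks below $d_b$, this proves the claim. \emph{The main obstacle is precisely this last probability estimate:} turning the per-point guarantee furnished by the boundary definition into a bound on the event that \emph{any} point beyond $d_b$ intrudes into the Top-$K$ set, which a naive union bound over the $M-b$ tail points would overcount; the monotonicity of Top-$K$ membership is what lets me collapse that union back to the single boundary quantity controlled by $\epsilon$.
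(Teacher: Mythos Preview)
Your proposal is correct and follows essentially the same route as the paper: factor the sample space into head $\{d_1,\dots,d_b\}$ and tail $\{d_{b+1},\dots,d_M\}$, observe that $\Delta(\mathcal{T}\cup\mathcal{R})-\Delta(\mathcal{T})$ is nonzero only on the event that the Top-$K$ set reaches past $d_b$, and bound that event by the Poisson--binomial tail $P[\mathbf{PB}(w_1,\dots,w_b)\le K-1]\le\epsilon$ furnished by the boundary-point definition. The paper's own proof is a terse three-line version of exactly this; your explicit factorization $P[\mathcal{S}]=P'[\mathcal{T}]P''[\mathcal{R}]$ actually makes the step from the two sums to the indicator bound cleaner than in the paper.

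One minor remark: the ``main obstacle'' you flag at the end is not really there. The spill-over event \emph{is} the single event $\{\text{fewer than }K\text{ of }d_1,\dots,d_b\text{ are sampled}\}$, so no union over tail points ever arises and no monotonicity argument is needed to collapse it---the Poisson--binomial bound applies directly. Also, the paper assumes $U\in[0,1]$, so the ``utility-range constant'' you mention absorbing is already $1$.
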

\begin{proof}See~\Cref{pdiscard}\end{proof}

From~\Cref{discard}, we can compute the $\epsilon$-approximation for every \datapoint{}~by discarding the outlier points $\{d_b, d_{b+1}, ..., d_M\}$. This reduces the time complexity from $\mathcal{O}{\left(N\cdot(M \log{M} + M K^2 + M K V )\right)}$ to $\mathcal{O}{\left(N\cdot(B \log{B} + B K^2 + B K V )\right)}$ where $B$ is the rank of the boundary point. 

\begin{theorem}
\label{BoundD}
If the value of all $w_i$ is greater than a certain constant $\lambda$, then the index of the boundary point $B$ is $\mathcal{O}(K)$.
\end{theorem}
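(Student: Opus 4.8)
The plan is to write the probability that a \datapoint{} $d_i$ lands in the $K$-nearest neighbor set of $x_{val}$ in closed form, and then show this probability decays exponentially in $i$ once $i$ exceeds a constant multiple of $K$. Assume the corpus is ranked so that $d_1$ is closest to $x_{val}$, and let $X_j \sim \mathrm{Bernoulli}(w_j)$ be independent indicators of the events $d_j \in \mathcal{S}$. The point $d_i$ is among the Top-$K$ retrieved from $\mathcal{S}$ exactly when $d_i \in \mathcal{S}$ and at most $K-1$ of the strictly closer points $d_1,\dots,d_{i-1}$ are also sampled. Writing $Z_i = \sum_{j=1}^{i-1} X_j$, this gives
\begin{equation}
\Pr[d_i \in \mathrm{KNN}(x_{val})] = w_i \cdot \Pr[Z_i \le K-1] \le \Pr[Z_i \le K-1],
\end{equation}
using $w_i \le 1$. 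Since $Z_i$ is a sum over the prefix $d_1,\dots,d_{i-1}$, it is stochastically increasing in $i$, so $\Pr[Z_i \le K-1]$ is nonincreasing; hence it suffices to find the smallest index at which this monotone quantity drops below $\epsilon$, after which every farther \datapoint{} automatically satisfies the boundary condition.

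Next I would lower-bound the mean of $Z_i$ using the hypothesis $w_j > \lambda$ for all $j$, namely $\mu_i := \mathbb{E}[Z_i] = \sum_{j=1}^{i-1} w_j > (i-1)\lambda$. Choosing the index so that $(i-1)\lambda \ge 2K$, i.e.\ $i-1 \ge 2K/\lambda$, we get $K-1 < K \le \mu_i/2$, placing $K-1$ firmly in the lower tail of $Z_i$. A multiplicative Chernoff bound for the lower tail, $\Pr[Z_i \le (1-\delta)\mu_i] \le \exp(-\mu_i \delta^2/2)$, applied with $\delta = 1/2$ then yields
\begin{equation}
\Pr[Z_i \le K-1] \le \Pr[Z_i \le \mu_i/2] \le \exp(-\mu_i/8) \le \exp\!\left(-(i-1)\lambda/8\right),
\end{equation}
which is at most $\epsilon$ as soon as $(i-1)\lambda/8 \ge \ln(1/\epsilon)$, i.e.\ $i-1 \ge 8\ln(1/\epsilon)/\lambda$.

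Combining the two requirements, the boundary condition $\Pr[d_i \in \mathrm{KNN}(x_{val})] < \epsilon$ holds for every index with $i-1 \ge \max\{2K/\lambda,\ 8\ln(1/\epsilon)/\lambda\}$. Since $\lambda$ and $\epsilon$ are fixed constants, the second term is $\mathcal{O}(1)$ and the first is $\mathcal{O}(K)$, so $B = \mathcal{O}(K)$, as claimed. The main obstacle I anticipate is the bookkeeping at the interface of the two regimes: one must guarantee that the threshold $K-1$ actually sits in the exponentially-decaying lower tail, which forces the mean to exceed $K$ by a constant factor before any Chernoff estimate becomes meaningful, and one must argue cleanly that the nonmonotonicity introduced by the individual weights does not interfere. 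The latter is handled by bounding $w_i \le 1$ and relying on the monotonicity of the prefix-sum tail $\Pr[Z_i \le K-1]$ rather than of the full product.
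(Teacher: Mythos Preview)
Your proposal is correct and follows essentially the same route as the paper: both arguments lower-bound $\mu_i$ by $(i-1)\lambda$, apply a multiplicative Chernoff lower-tail bound to the Poisson binomial prefix sum, and arrive at a threshold of the form $\mathcal{O}\!\left(K/\lambda + \ln(1/\epsilon)/\lambda\right)$, which is $\mathcal{O}(K)$ once $\lambda,\epsilon$ are treated as constants. The paper writes the Chernoff bound in the form $\exp\!\big(-(\mu_b-K+1)^2/(2\mu_b)\big)$ and obtains constants $(2K-2)/\lambda$ and $4\ln(1/\epsilon)/\lambda$ instead of your $2K/\lambda$ and $8\ln(1/\epsilon)/\lambda$, but the structure and substance are identical.
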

\begin{proof}See~\Cref{pBoundD}\end{proof}

The above theorem shows that if all weights $W$ are greater than a certain constant, the scale of $B$ is only related to $K$ instead of the size of the retrieval corpus $M$. It means that even though we may have millions of \datapoints{} in the retrieval corpus, we only have to consider $O(K)$ \datapoints{} with the highest rank for a validation tuple. The overall time complexity for computing the approximate gradients for models with additive utility functions is $\mathcal{O}{\left(N\cdot(K \log{K} + K K V )\right)} = \mathcal{O}{\left(N\cdot K^2 \cdot V\right)}$. This significantly speeds up their computation.

\subsection{$(\epsilon, \delta)$-approximation Algorithm for Models with General Utility Functions}
\label{ed_MLE}

Next, we provide a solution for efficiently approximating gradients for retrieval-augmented models with a general utility function. According to what we proposed in the previous section, for every validation tuple, we can find the boundary point of the retrieval corpus. When a point has a smaller rank score than the boundary point, the epsilon approximation is 0. Using the Markov chain Monte Carlo method, we can calculate an approximation of the gradients for a retrieval-augmented model with a general utility function. In light of the fact that 0 is the approximate value for most points, we only need to perform MCMC on a small number of data points. Detailed proofs and algorithms are provided in \cref{aped_MLE}.

\subsection{Projected Gradient Descent for Weights on a Data Source Level}
\label{sec:impl}

\header{Exact gradients for a grouped retrieval corpus} In the previous section, we introduced the algorithm for calculating gradients for weights for the multilinear extension of the utility function. We also proved that each validation tuple only contributes gradients to a small part of the retrieval corpus. A further problem is how to evaluate the quality of the \datapoints{} which are not retrieved for the validation tuples. In real-world ML applications, a retrieval corpus is commonly generated from various data sources. For example, \datapoints{} in the retrieval corpus may come from the same labeler, the same websites, or the same database. As a consequence, we can evaluate data quality at this level, which we call the source level. This has the additional advantage that we do not have to inspect every data point before identifying if the data is useful. We formulate the corresponding problem as follows. Given a series of data sources for the retrieval corpus $\mathcal{O}_{ret} = \{o_1, o_2, ..., o_M\}$, the generated retrieval corpus can be represented as a function of these sources ${D}_{ret} = \bigcup_{i=1}^{M}{f_{source}(o_i)}$. We detail how to compute the exact gradient of the weights for the K-Nearest Neighbor classifier and a grouped corpus in~\Cref{ap_group_MLE}. The time complexity of the algorithm is $\mathcal{O}\left(N \cdot T^2\cdot M^2\right)$, where $T$ is the size of the generated retrieval corpus. 

\header{Projected gradient descent for weights on a grouped corpus} In general, given the retrieval corpus and the validation set, we can use a textbook batch gradient descent algorithm to find the optimal weights for the \datapoint{}s in the retrieval corpus. From the previous paragraph, we can see that computing the exact gradient values for a grouped retrieval corpus with several data sources can be computationally expensive. Therefore, we propose a projected gradient descent algorithm to efficiently learn the optimal weights for retrieval corpus generated from data sources. Given the generated retrieval corpus represented as a function of the sources $\{o_1, o_2, ..., o_M\}$, ${D}_{ret} = \bigcup_{i=1}^{M}{f_{source}(o_i)}$, we assign a weight to each \datapoint{} in the generated retrieval corpus ${D}_{ret}$. Suppose there are $m_i$ \datapoint{}s in $f_{source}(o_i)$, we assign the weights $\{w_{i,1}.w_{i,2}, ..., w_{i, m_i}\}$ to each \datapoint{} in $f_{source}(o_i)$. The original optimization problem can be relaxed to a constrained optimization problem as detailed below:
\begin{equation}
\begin{aligned}
    &\max_{w_{1,1},\cdots,w_{M,m_M} \in [0, 1]} \tilde{U}(\ w_{1,1},..., w_{M,m_M}) \\
    &\begin{array}{r@{\quad}r@{}l@{\quad}l}
    s.t. &w_{1,1} = w_{1, 2} = \cdots &= w_{1, m_1} \\
         &w_{2,1} = w_{2, 2} = \cdots &= w_{2, m_2} \\
         & \cdots\\
         &w_{M,1} = w_{M, 2} = \cdots &= w_{M, m_M} \\
    \end{array}
\end{aligned}
\end{equation}
To find the optimum of this function, we use the existing algorithm for a non-grouped corpus to compute the gradient of the weight for each $w_{i,j}$. After we update the parameters using the gradients, we project the updated $w_{i,j}$ to satisfy the constraints by computing $\hat{w_i} = \frac{1}{\alpha} \sum{w_{i,j}}$ and set every $w_{i,j}$ to $\hat{w_i}$. Therefore, we can utilize the algorithm introduced in~\Cref{ep_MLE} to calculate the gradient and then compute the average. For retrieval-augmented models with additive utility functions, the time complexity becomes $\mathcal{O}{\left(N\cdot K^2 \cdot V + T\right)}$.


\section{Experimental Evaluation}
\label{experiment}

We conduct a series of experiments for question answering and data imputation tasks. We confirm in~\Cref{experiment1} that retrieval augmentation enhances the performance of large language models.~\Cref{experiment2} and~\Cref{experiment3} show that the multilinear extension weights help us identify noisy/incorrect data in the retrieval corpus, and that pruning or reweighting the retrieval corpus accordingly improves performance without the need to fine-tune the underlying model. The runtime of the algorithm is examined in~\Cref{experiment4}, where we showcase that the weights can be computed very fast in practice. We provide the source code of our implementation and experiments under \url{https://github.com/schelterlabs/retrieval_importance}.

\header{Datasets and tasks} For {\em question answering}, we leverage the \texttt{WikiFact}~\cite{liang2022holistic} dataset, in which questions are extracted from Wikipedia pages using relation pairs. The answer to each question in this dataset can be found on Wikipedia. For example, for the relation "author", a question is "The author of Nimmer on Copyright is ?". We filter out relations with less than 500 questions and use each of the remaining 70 relations as a separate downstream task. In {\em data imputation}, the task is to predict missing values of a relational table~\cite{narayancan2022}. We experiment with two common benchmark datasets for this task: \texttt{restaurant}, where the \texttt{city} column of a table about restaurants must be imputed, and \texttt{buy}, where we have to impute the \texttt{manufacturer} column in a table about electronics products. For each experimental run on a question answering or data imputation task, we randomly split the dataset into validation dataset and test dataset with an equal number of tuples. We repeat this for 64 different random seeds, and report the mean accuracy. For the zero-shot baselines in the imputation tasks, we use the prompts suggested in \cite{narayancan2022}.

\header{Language models} We leverage the language model GPT-JT~\cite{tay2022transcending, tay2022unifying} with 6 billion parameters, which we enhance with retrieval augmentation. As a reference, we compare this to the language model ``text-davinci-003'' (to which we refer to as \gptthree{}) from OpenAI's commercial GPT-3.5 family~\cite{openai}. For both language models, we generate predictions with zero-shot or few-shot prompting, without further fine-tuning.

\header{Retrieval augmentation} We leverage the Microsoft Bing search engine~\cite{Bing} to generate a retrieval corpus for each task. We create a query from each validation/test sample (e.g., the question to answer) and retrieve the first 50 websites together with their textual snippets provided by Bing as retrieved \datapoints{} for the sample. We sort these \datapoints{} according to the ranking score provided by Bing. We create a few-shot prompt from each retrieved \datapoint{}, and generate an answer for the corresponding validation sample via GPT-JT. We decide on the final prediction via a majority vote using the generated answers from the top-$K$ websites.

\header{Reweighting or pruning the retrieval corpus} In experiments which reweight or prune the retrieval corpus  based on multilinear extension weights, we proceed as follows. We choose K = 10 and set the initial weight to 0.5. We group the retrieved websites by their domain name, and run the projected gradient descent algorithm from \Cref{sec:impl} for 50 iterations with a learning rate of 500 on the validation dataset to compute the optimal weights. Next, for reweighting, we compute the expectation of the accuracy on the test set by randomly sampling the retrieved \datapoint{}s 32 times based on the learned weights to form the retrieval corpus. For pruning, we remove retrieved \datapoint{}s with a learned weight below a certain threshold (tuned on the validation set) before computing predictions on the test set via majority vote. We use the leave-one-out (LOO) error as a \textbf{baseline} to refine the retrieval corpus. We compute the change in accuracy for the removal of each individual data source and finally remove all data sources with a LOO error below a certain threshold (tuned on the validation set) before computing predictions on the test set. 

\subsection{Benefits of Retrieval Augmentation}
\label{experiment1}

\headerl{Experimental setup} The aim of this experiment is to confirm the well-known fact that retrieval augmentation alone already enhances the performance of language models. We compare the performance of \gptthree{} without retrieval augmentation to the performance of GPT-JT with retrieval augmentation on the question answering and data imputation tasks.

\headerl{Results and discussion} The results for question answering are shown in~\Cref{Srawgptretri}. The mean accuracy of \gptthree{} over all 70 relations is 0.33, which outperforms the mean accuracy of 0.21 achieved by GPT-JT standalone. However, retrieval augmentation raises the mean accuracy of GPT-JT to 0.33, making it competitive with the 30x larger \gptthree{}. The smaller model even outperforms the larger model in the majority of relations (39 out of 70, detailed results available in~\Cref{gpt_jt_w_retrieve}). We encounter the analogous behavior for data imputation in \Cref{tab:imputation}, where retrieval augmentation (\textsc{vanilla}) makes the small GPT-JT model competitive with the 30x larger \gptthree{} model, and even outperforms it on both datasets.

\begin{table*}[t]
\caption{Average accuracy for question answering on \texttt{Wikifact}. A small language model with retrieval augmentation and learned multilinear extension weights outperforms a large model with 30 times more parameters.}
\subfloat[Benefits of retrieval augmentation.]{
	\scriptsize
	\begin{sc}
	\begin{tabular}{cccc|c}
	\toprule
	\multirowcell{2}{GPT-JT\\(6B)} &
	\multicolumn{3}{c}{GPT-JT (6B) w/ Retrieval} & \multirowcell{2}{\gptthree{}\\(175B)} \\
	\cmidrule(r){2-4}
	   &  K = 1 & K = 10 & K = 50 &\\
	\midrule
	 0.214 & 0.332 & \underline{0.333} & 0.293&\textbf{0.339} \\
	\bottomrule
	\end{tabular}
	\end{sc}
	\label{Srawgptretri}
}
\quad
\subfloat[Benefits of weight-based reweighting and pruning.]{
		\scriptsize
		\begin{sc}
		\begin{tabular}{cccc|c}
		\toprule
		\multicolumn{4}{c}{GPT-JT (6B) w/ Retrieval} & \multirowcell{2}{\gptthree{}\\(175B)} \\
		\cmidrule(r){1-4}
		   vanilla & + loo & + reweight & + prune&\\
		\midrule
	 0.333 & 0.358 & \underline{0.380} & \textbf{0.392} & 0.339\\
		\bottomrule
		\end{tabular}
		\end{sc}
		\label{SURLreweighting}
}
\end{table*}


\begin{figure}
  \centering
  \includegraphics[width=\textwidth]{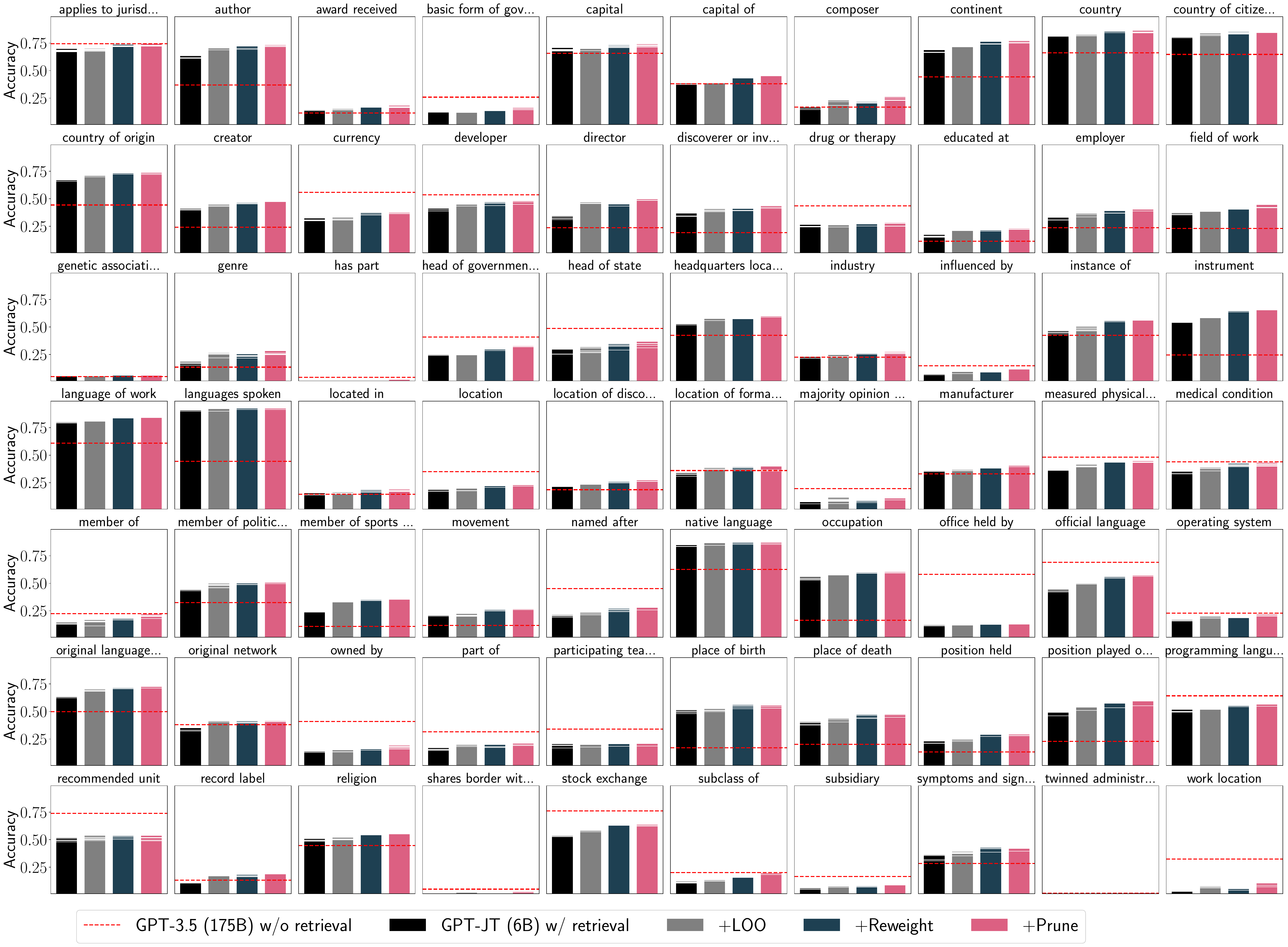}
  \caption{Accuracy for question answering on the 70 relations from \texttt{WikiFact}.}
  \label{fig:result}
\end{figure}

\subsection{Improving Performance with Multilinear Extension Weights}
\label{experiment2}

\headerl{Experimental setup} Next, we showcase that pruning or reweighting the retrieval corpus based on multilinear extension weights importance improves performance without having to fine-tune the underlying model. We group the retrieved websites by domain and refine the corpus as detailed earlier. 

\headerl{Results and discussion} The results for question answering are shown in Table~\ref{SURLreweighting} (detailed results in\Cref{fig:result} and \Cref{MLE_wikifact}), and confirm that reweighting and pruning using the learned weights increases test accuracy. The mean accuracy of the GPT-JT model retrieval augmentation increases from 33.3\% to 37.7\% after pruning (and to 36.9\% after reweighting) using the multilinear extension weights, and it clearly outperforms the state-of-the-art \gptthree{} model with 175 billion parameters. In all 70 relations, the performance improved using multilinear extension weights by removing 71.5\% of the retrieval corpus on average. Analogously, we find that the performance in the data imputation tasks is improved by pruning based on the learned weights importance as well (\Cref{tab:imputation}). For both datasets, the smaller model outperforms \gptthree{} by more than 5\% in test accuracy. These results confirm that the performance of retrieval-augmented models can be further optimized by evaluating the quality and reliability of real-world data sources in their underlying corpus.


\begin{table*}[t]
	\caption{Average accuracy for data imputation on \texttt{buy} and \texttt{restaurant}.}
	\label{tab:imputation}
	\vskip 0.15in
	\begin{center}
		\scriptsize
		\begin{sc}
			\begin{tabular}{lccccc|c}
				\toprule
				\multirow{2}*{Dataset}  &
				\multirowcell{2}{GPT-JT \\(6B)} &	
				\multicolumn{4}{c}{GPT-JT (6B) w/ retrival}&
				\multirowcell{2}{\gptthree{}\\(175B)} \\
				\cmidrule(r){3-6}
				 & & vanilla& +loo & +reweight & +prune &\\
				\hline
						Buy  & 0.102 & 0.789 &  0.808 & \textbf{0.815}& \underline{0.813}& 0.764\\
						Restaurant  & 0.030 &  0.746 &0.756&\underline{0.760}& \textbf{0.761}& 0.463\\ 
				\bottomrule
			\end{tabular}
		\end{sc}
	\end{center}
	\vskip -0.1in
\end{table*}

\subsection{Mitigating the Impact of Noise in the Retrieval Corpus}
\label{experiment3}

\begin{table*}[t]
\caption{Accuracy improvements for GPT-JT (6B) with retrieval augmentation on a noisy corpus.}
\label{Sdirtycorpus}
\vskip 0.15in
\begin{center}
\scriptsize
\begin{sc}
\begin{tabular}{ccccr}
\toprule
\multirow{2}*{CLEAN CORPUS} &
\multicolumn{4}{c}{DIRTY CORPUS} \\ 
\cmidrule{2-5}
  & vanilla & + loo & + reweight & + prune\\
\midrule
\underline{0.333} & 0.270 & 0.311 & 0.330 & \textbf{0.335} \\
\bottomrule
\end{tabular}
\end{sc}
\end{center}
\vskip -0.1in
\end{table*}

\headerl{Experimental setup} The aim of the following experiment is to demonstrate how the learned weights assist us with mitigating the impact of noise in the retrieval corpus. To achieve this, we manually inject noise into the retrieval corpus of the question-answering task as follows. We create five copies of the retrieval corpus for each question with noise levels ranging from $0\%$ to $80\%$ (resulting in around 250 retrieved websites per question, of which $40\%$ are corrupted). To inject noise, we randomly replace the correct answer in the retrieved websites with an incorrect one according to the noise level. Then, for each copy, we randomly split the corpus into ten sources according to rank. Now we have $5 \cdot 10$ different sources in total with different noise levels. We expect a performance drop when using the dirty corpus and aim to demonstrate how data evaluation can help us restore performance. 

\headerl{Results and discussion} As shown in \Cref{Sdirtycorpus}, the performance drops from 33.3\% on the clean corpus to 27.0\% on the dirty corpus with injected noise. Using the leave-one-out error to remove noise sources improves performance to 31.1\%. Both reweighting and pruning using learned weights drastically improve the performance on the dirty corpus and both enhance the performance by over 33.0\% on the dirty corpus. Pruning even results in a better performance of 33.5\% compared to the clean corpus without pruning. The results show that even if we are faced with a situation where nearly half of the retrieval corpus is noisy, multilinear extension weights can help the model reach performance comparable to the clean corpus.

%

  \begin{minipage}{\textwidth}
  \begin{minipage}[b]{0.61\textwidth}
    \centering
    \scriptsize
    \begin{sc}
    \begin{tabular}{ccccc|c}
    \toprule
    
    \multirowcell{2}{GPT-JT (6B)\\w/~ Retrieval} &
    \multicolumn{4}{c}{GPT-JT (6B) w/~ Retrieval + Fabricated Data} & \multirowcell{2}{\gptthree{} \\(175B)} \\ 
    \cmidrule{2-5}
     &  vanilla & +loo & +reweight & +prune&\\
    \midrule
     0.333 & 0.382 & 0.399 & \underline{0.410} & \textbf{0.418}  & 0.339\\
    \bottomrule
    \end{tabular}
    \end{sc}
    \vspace{1cm}
    \captionof{table}{Accuracy impact of additional fabricated data sources for question answering on \texttt{Wikifact}.}
    \label{Sfakewikireweighting}
    \end{minipage}
    \hfill
    \begin{minipage}[b]{0.35\textwidth}
      \centering
      \includegraphics[width=\textwidth]{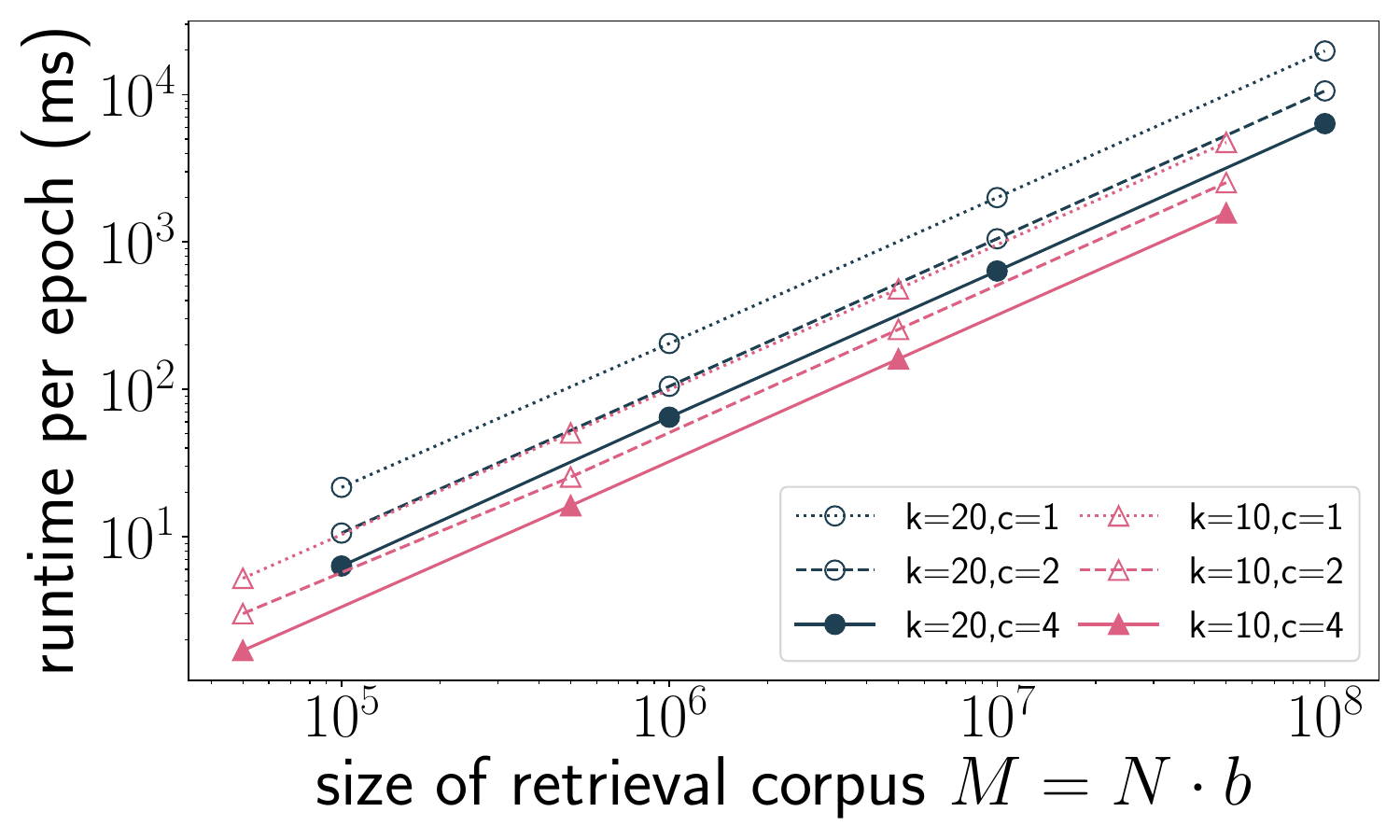}
      \captionof{figure}{Runtime per epoch on corpora with up to 100M elements.}
      \label{fig:runtimes}	
    \end{minipage}    
  \end{minipage}

\subsection{Handling Auto-Generated Data Sources in the Retrieval Corpus}
\label{exp:adversarial}

\headerl{Experimental setup} Next, we illustrate how learned weights allows us to handle new sources in the retrieval corpus for question answering. We manually generate five synthetic Wikipedia pages for each question using the OpenAI ``text-davinci'' generator. We adopt the real Wikipedia pages as a few-shot example, add the fabricated sources to the retrieval corpus and give them the highest rank among the websites. We aim to show that when new knowledge is added to the corpus, the learned weights help us to utilize the sources based on their quality. 

\headerl{Results and discussion} \Cref{Sfakewikireweighting} shows the results of this experiment. We find that adding fabricated Wikipedia pages to the corpus increases the accuracy from 33.3\% to 38.2\%. This is due to the fact that the OpenAI model itself can reach 33.9\% and most Wikipedia pages contain the correct information if the model memorizes the answer. We see, however (e.g., for the relation "place of death"), that adding generated Wikipedia pages will decrease the performance from 38.3\% to 33.8\%.  Using LOO to prune the retrieval corpus improves performance by 39.9\% on average. Reweighting or pruning using the learned multilinear extension weights achieves the highest accuracy of 41.0\% and 41.8\%, improving the performance on the corpus without fabricated Wikipedia sources. The results show that the learned weights help the model to easily adapt to new knowledge sources without further training.

\subsection{Computational Performance}
\label{experiment4}

\headerl{Experimental setup} Finally, we illustrate that the weights can be computed very fast in practice. For that, we implement our approach in Rust (with a Python frontend), and apply several performance optimizations to the code such as parallelization, memory pre-allocation and re-use, operator fusion, and predication~\cite{chentvm2018,neumann2011efficiently}. We run the experiments on consumer hardware (a machine with a four-core Intel i7-8569U CPU @2.80GHz, 16GB of RAM, and MacOS 12.6). We measure the runtime of our implementation on three relations from the \texttt{Wikifact} dataset (``author'', ``place-of-birth'', ``currency''), which contain 1,700-2,700 questions each, with 50 corresponding retrieved answers per question. We additionally run experiments on a synthetic retrieval corpus whose size $M = N \cdot b$ we scale up from 50,000 to 100,000,000 (with a validation set size $N$ from 1,000 to 1000,000 times $b=[50, 100]$ retrieved \datapoints{} per sample). We run each configuration with one, two, and four threads, repeat each run seven times, and measure the mean execution time per epoch. 

\headerl{Results and discussion} For the relations from \texttt{WikiFact}, a gradient update only takes between two and four milliseconds. We plot the results for the synthetic corpus in \Cref{fig:runtimes}. The x-axis is the size of the retrieval corpus $M = N \cdot b$ (size $N$ of the validation set times the number of retrieved \datapoints{} per sample $b$) and the y-axis denotes the mean runtime in milliseconds with a logarithmic scale. We see that with all four cores, we can finish an epoch for corpora with up to 10 million elements with a sub-second runtime. Even for the largest corpus with 100 million elements, an epoch can be conducted in 6.3 seconds on consumer hardware. Furthermore, we find that the runtime grows linearly with the size of the retrieval corpus and that our implementation easily benefits from parallelism when multiple cores are utilized. This showcases that data refinement using multilinear extension weights is computationally cheaper than model fine-tuning, which (in many cases) has to conduct an expensive backpropagation of errors through the underlying model.



\section{Conclusion}

We presented efficient algorithms to compute the optimal weights that maximize the multilinear extension of the utility function and use them to refine the retrieval corpus for retrieval-augmented large language models. Overall, our results illustrate that the learned weights are a powerful metric for evaluating the quality of the retrieval corpus and that retrieval-augmented models can be enhanced by only pruning the retrieval corpus without further training the underlying model. Furthermore, the weights can be computed efficiently even for a large retrieval corpus, and allow us to easily adapt predictions in cases where new sources are added to the retrieval corpus. 


\newpage
\appendix



\section{Exact Gradient Calculation for Models with an Additive Utility Function}

\label{ap_Exact_MLE}

We will first introduce two building blocks to help calculate the gradient:
\begin{definition}{(\textbf{Subset Probability})}
Given the retrieval corpus $\mathcal{D}_{ret} = \{d_1, ..., d_M\}$ and the weights $W = \{w_1, ..., w_M\}$, \textbf{subset probability} $P_k(a, b)$ is the sum of the probability of subsets with a size of $k$ from $\mathcal{D}^{\prime} = \{d_a, d_{a+1},...,d_b\}$.
\begin{equation}
P_k(a, b) = \sum_{|\mathcal{S}| = k, \mathcal{S}\subseteq\mathcal{D}^{\prime}} \prod_{d_i \in \mathcal{S}} w_i
       \prod_{d_i \not \in \mathcal{S}} (1 - w_i)
\end{equation}
\end{definition}
We only use the subset probability values $P_{\cdot}{(1, \cdot)}$ and $P_{\cdot}{(\cdot, M)}$. We compute these subset probability values within $\mathcal{O}(MK)$ time complexity, leveraging previous work on efficiently computing Poisson-binomial distribution values~\cite{hong2013computing}.

\begin{definition}{(\textbf{Boundary Value Probability})}
Given the validation tuple $x_{val}$, the retrieval corpus $\mathcal{D}_{ret} = \{d_1, ..., d_M\}$, the weights $W = \{w_1, ..., w_M\}$ and the possible value set $\mathcal{V}$ of the utility function, the \textbf{boundary value probability} $B_k(i, e)$ is the sum of the probability of all subsets $\mathcal{S}$ sampled from $\mathcal{D}^{\prime} = \{d_i, d_{i+1},...,d_M\}$ whose $k$-th element $d_{\alpha_k{(\mathcal{S})}}$ is evaluated as $e$.

\begin{equation}
B_k(i, e) = \sum_{U_{x_{val}}(d_{\alpha_k{(\mathcal{S})}}) = e, \mathcal{S}\subseteq\mathcal{D}^{\prime}} \prod_{d_i \in \mathcal{S}} w_i
       \prod_{d_i \not \in \mathcal{S}} (1 - w_i)
\end{equation}
\end{definition}

This term can be calculated via dynamic programming:

\begin{equation}
\label{DPBVP}
B_k(i, e)=\left\{
\begin{array}{lr}
B_k(i+1, e) * (1 - w_i) +  
\mathbb{I}\{ U_{x_{val}}(d_{i})=e\} 
* w_i & k=1\\
B_k(i+1, e) * (1 - w_i) +  B_{k-1}(i+1, e) * w_i & k>1 
\end{array}
    \right.
\end{equation}

To make Equation~(\ref{DPBVP}) correct for every $k \in [1, K]$, $i \in [1, M]$ and $e \in \mathcal{V}$, we initialize the boundary value to $B_k(M+1, e)=0$ for $k \in[1, K], e \in \mathcal{V}$. The time complexity of computing the boundary value probability using the above equation is $\mathcal{O}{(M K V)}$.

With these two building blocks, we are able to calculate the exact value of $G(x_{val},\ w_i)$. We examine two situations.

(1) $|\mathcal{S}|<K$

In this case, the size of sampled retrieval corpus $\mathcal{S}$ is smaller than $K$. Therefore, including the \datapoint{} $d_i$ in $\mathcal{S}$ does not expel any \datapoint{} from the $K$-nearest neighbor set. Thus, 
\begin{equation} 
      U_{x_{val}}(\mathcal{S} \cup \{d_i\})
     - U_{x_{val}}(\mathcal{S}) = 
     \frac{U_{x_{val}}(d_{i})}{K}
\end{equation}
The sum of the probability of the respective subsets $\mathcal{S}$ equals the probability of selecting subsets with sizes less than $K$ from $\{ d_1,...,d_{i-1},d_{i+1},...,d_M \}$. The gradient for these sampled subsets $\mathcal{S}$ can be written as:
\begin{equation}
\begin{aligned}
 G_1(x_{val},\ w_i) 
& = \sum_{|S| <K,\mathcal{S} \subseteq \mathcal{D}_{ret}\backslash d_i}
    \frac{U_{x_{val}}(d_{i})}{K}
    \cdot P[\mathcal{S}] \\
& = \frac{U_{x_{val}}(d_{i})}{K} 
    \cdot \sum_{k^{\prime} = 0}^{K-1} \sum_{j = 0}^{k^{\prime}} P_{j}{(1, i-1)}
    \cdot P_{k^{\prime}-j}{(i+1, M)}
\end{aligned}
\end{equation}
The time complexity of computing all $G_1(x_{val},\ w_i)$ using the above equation is $\mathcal{O}{(M K^2)}$.

(2) $|\mathcal{S}| \geq K$

In this scenario, adding \datapoint{} $d_i$ to the sampled corpus $\mathcal{S}$ expels \datapoint{} $d_{\alpha_K(S)}$ from the $K$-nearest neighbor set. The corresponding difference in the utility function is:
\begin{equation}
U_{x_{val}}(\mathcal{S} \cup \{d_i\})
     - U_{x_{val}}(\mathcal{S}) = 
     \frac{U_{x_{val}}(d_{i}) - U_{x_{val}}(d_{\alpha_K(S)})}{K}    
\end{equation}
The gradient contributed by the corresponding sampled subsets can be calculated by enumerating the \datapoints{} that would be expelled from the $K$-nearest neighbor set. Suppose $d_{k^{\prime}}$ is the one to be expelled, then the sum of the probability of the corresponding subsets $\mathcal{S}$ equals selecting $K$ data points from $\{ d_1,...,d_{i-1},d_{i+1},...,d_{k^{\prime}} \}$. Therefore, the sum of the gradient can be written as:
\begin{equation}
\begin{aligned}
 G_2(x_{val},\ w_i) 
= &\sum_{|S| \geq K,\mathcal{S} \subseteq \mathcal{D}_{ret}\backslash d_i}
   \frac{U_{x_{val}}(d_{i}) - U_{x_{val}}(d_{\alpha_K(S)})}{K}
    \cdot P[\mathcal{S}] \\
= &\sum_{e\in\mathcal{V}} 
\frac{U_{x_{val}}(d_{i}) - e}{K} \cdot  \sum_{j = 0}^{K-1} P_{j}{(1, i-1)}
    \cdot B_{K-j}{(i+1, e)}
\end{aligned}
\end{equation}

The time complexity of computing all $G_2(x_{val},\ w_i)$ using the above equation is $\mathcal{O}{(M K V)}$.
The exact gradient values $G(x_{val},\ w_i)$ can be computed by the sum of $G_1(x_{val},\ w_i)$ and $ G_2(x_{val},\ w_i)$. The detailed algorithm is shown in Algorithm~\ref{alg:MCforExMLEBVP}. The overall time complexity of the algorithm is $\mathcal{O}{\left(N\cdot(M \log{M} + M K^2 + M K V )\right)}$

\begin{algorithm}[tb]
   \caption{Exact Gradient Calculation for Models with an Additive Utility Function}
   \label{alg:MCforExMLEBVP}
\begin{algorithmic}
   \STATE {\bfseries Input:} \\
   $\mathcal{D}_{ret} = \{d_1, ..., d_M\}$, retrieval corpus; \\
   $\mathcal{D}_{val} = \{x_1,\cdots ,x_N\}$, validation set; \\
   $W = \{w_1, ..., w_M\}$, weights of \datapoint{s};\\
   \STATE {\bfseries Output:} \\
   $\{g_1,\cdots,g_M\}$, gradients of weights;\\
   \STATE {$\{g_1, ..., g_M\} \leftarrow 0$}
   \FOR{ $x_{val} \in \mathcal{D}_{val}$}
   \STATE $\{d_{\pi_1},\cdots,d_{\pi_M}\} \leftarrow$ SortByRankingScore($\mathcal{D}_{ret},x_{val}$)
   \STATE $P \leftarrow$ ComputeSubsetProb$(W, \pi)$
   \STATE $B,\mathcal{V} \leftarrow$ ComputeBVProb$(W, \pi, \mathcal{D}_{ret}, x_{val})$
   \FOR{$i \leftarrow 1 \ to \ M$}
   \STATE $\mu_1 \leftarrow
   \frac{U_{x_{val}}(d_{\pi_i})}{K} \cdot \frac{1}{N}$
   \FOR{$k^{\prime} \leftarrow 0 \ to \ K-1$}
   \FOR{$j \leftarrow 0 \ to \ k^{\prime}$}
   \STATE $g_{\pi_i}\leftarrow g_{\pi_i} + \mu_1 \cdot P_{j}{(1, i-1)} \cdot P_{k^{\prime}-j}{(i+1, M)}$ 
   \ENDFOR
   \ENDFOR
   \FOR{$e \in \mathcal{V}$}
   \STATE $\mu_2 \leftarrow \frac{U_{x_{val}}(d_{\pi_i}) - e}{K} \cdot \frac{1}{N}$
   \FOR{$j \leftarrow 0 \ to \ K-1$}
   \STATE $g_{\pi_i}\leftarrow g_{\pi_i} + \mu_2 \cdot P_{j}{(1, i-1)}
    \cdot B_{K-j}{(i+1, e)}$ 
   \ENDFOR
   \ENDFOR
   \ENDFOR
   \ENDFOR
\end{algorithmic}
\end{algorithm}

\section{$\epsilon$-approximation Algorithm for Calculating Exact Gradient Value}
\label{ap_ep_MLE}

The overall time complexity for computing gradients for models with an additive utility function is $\mathcal{O}{\left(N\cdot(M \log{M} + M K^2 + M K V )\right)}$. 
In this section, we show that if we are allowed to do $\epsilon$-approximations, we can significantly speed up the calculation of the gradients.

\begin{theorem}
\label{ap_computeep}
If we calculate the $\epsilon$-approximation $\hat{G}(x_{val},\ w_i)$ for the each $G(x_{val},\ w_i)$, we can get the $\epsilon$-approximation for $\frac{\partial \tilde{U}}{\partial w_i}$ as the average of $\hat{G}(x_{val},\ w_i)$.
\end{theorem}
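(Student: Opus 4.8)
The plan is to reduce the statement to a direct application of the triangle inequality, leveraging the decomposition of the gradient already established in Equation~(\ref{gradient}). Recall from that equation that the partial derivative is exactly the average of the per-tuple contributions, namely $\frac{\partial \tilde{U}}{\partial w_i} = \frac{1}{|\mathcal{D}_{val}|} \sum_{x_{val} \in \mathcal{D}_{val}} G(x_{val}, w_i)$. First I would make explicit the notion of $\epsilon$-approximation in play: for each validation tuple $x_{val}$, the approximation $\hat{G}(x_{val}, w_i)$ satisfies the additive bound $|\hat{G}(x_{val}, w_i) - G(x_{val}, w_i)| \le \epsilon$.

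Next, I would define the candidate approximation for the full gradient as the average $\frac{1}{|\mathcal{D}_{val}|}\sum_{x_{val}} \hat{G}(x_{val}, w_i)$, mirroring the exact decomposition. The key step is then to bound the difference between this quantity and the true partial derivative. Using linearity of the averaging operation, this difference equals $\frac{1}{|\mathcal{D}_{val}|}\sum_{x_{val}}\bigl(\hat{G}(x_{val}, w_i) - G(x_{val}, w_i)\bigr)$, so by the triangle inequality its absolute value is at most $\frac{1}{|\mathcal{D}_{val}|}\sum_{x_{val}} |\hat{G}(x_{val}, w_i) - G(x_{val}, w_i)| \le \frac{1}{|\mathcal{D}_{val}|}\sum_{x_{val}} \epsilon = \epsilon$, which yields the claim.

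There is no genuine obstacle in this argument; it is essentially a bookkeeping lemma justifying that one may compute per-tuple approximations independently and then average them. The only point deserving care is to fix the precise meaning of ``$\epsilon$-approximation'' as additive rather than multiplicative, and to confirm that this guarantee is preserved under convex combinations---which it is, since the error of an average is dominated by the worst individual error. I would state this explicitly so that the subsequent results (\Cref{discard}, \Cref{BoundD}), which actually produce the per-tuple approximations $\hat{G}(x_{val}, w_i)$, can be invoked directly to instantiate the overall guarantee for $\frac{\partial \tilde{U}}{\partial w_i}$.
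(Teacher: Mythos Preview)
Your proposal is correct and matches the paper's own proof essentially line for line: define $\hat{g}_i$ as the average of the $\hat{G}(x_{val}, w_i)$, subtract from the exact decomposition in Equation~(\ref{gradient}), apply the triangle inequality, and use the per-tuple $\epsilon$ bound to conclude. The only addition you make is the explicit remark that ``$\epsilon$-approximation'' is additive, which is a helpful clarification but not a departure from the paper's argument.
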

\begin{proof}See \Cref{pcomputeep}.\end{proof}

Our next step is to detail how to compute the $\epsilon$-approximation for $G(x_{val},\ w_i)$. The variable $\phi_{\mathcal{S}, x_{val}}\left(\ d_i  \right) = (U_{x_{val}}(\mathcal{S} \cup \{d_i\}) - U_{x_{val}}(\mathcal{S}))$ equals zero if $d_i$ is not in the $K$-nearest neighbor set of $\mathcal{S} \cup \{d_i\}$. This is due to the fact that adding the \datapoint{} $d_i$ to the corpus will not change the \datapoints{} retrieved by the model. Assuming that the utility function value is within the range of $[0, 1]$, \Cref{gradient} can be written as:
\begin{equation}
\label{bound_gradient}
\begin{aligned}
 \left| G(x_{val},\ w_i) \right| 
& = \left| \sum_{\mathcal{S} \subseteq \mathcal{D}_{ret}\backslash d_i}
\mathbb{I}
   \left\{ d_i\in \operatorname{top_K}(\mathcal{S} \cup \{d_i\})\right\} \cdot
   \underbrace{\phi_{\mathcal{S}, x_{val}}\left(\ d_i  \right)}_{\in [-1, 1]}
    \cdot P(\mathcal{S}) \right| \\
& \leq \sum_{\mathcal{S} \subseteq \mathcal{D}_{ret}\backslash d_i}
\mathbb{I}
   \left\{ d_i \in \operatorname{top_K}(\mathcal{S} \cup \{d_i\})\right\} \cdot
    P[\mathcal{S}]
\end{aligned}
\end{equation}
From Equation~\ref{bound_gradient} we can see that the absolute value $G(x_{val},\ w_i)$ is bounded by the sum of the probabilities of the \datapoints{} $d_i$ in the $K$-nearest neighbor set of $x_{val}$. The probability of \datapoint{} $d_i$ to be in the $K$-nearest neighbor set equals the probability of less than $(K-1)$ points with higher ranks appearing in $\mathcal{S}$. The latter can be modeled by a Poisson-binomial distribution. Suppose that the retrieval corpus $\{d_1, d_2, \cdots, d_M\}$ is ranked with respect to $x_{val}$, then the gradient can be bounded via the Chernoff bound for $\mu_i > K-1$, where $\mu_{i}=\sum_{k=0}^{i-1}w_k$:
\begin{equation}
\begin{aligned}
\left| G(x_{val},\ w_i) \right| & \leq P[{\bf PB}(w_1, w_2, \cdots, w_{i-1}) \leq K-1]
&  \leq 
\exp(-\frac{(\mu_{i}-K+1)^2}{2\mu_{i}})
\end{aligned}
\end{equation}
Notice that for a \datapoint{} with a lower rank, the probability of it being in the $K$-nearest neighbor set is smaller. Therefore we can define the boundary point $d_b$ of the retrieval corpus. 

\begin{definition}{(\textbf{Boundary Point})}
Given a validation tuple $x_{val}$ and the retrieval corpus $\mathcal{D}_{ret} = \{d_1, \cdots, d_M\}$ ranked with respect to $x_{val}$, the \textbf{boundary point $d_b$} is the \datapoint{} with the highest rank in the sorted corpus such that any \datapoint{} that has a lower rank than $d_b$ has a probability less than $\epsilon$ to be in the $K$-nearest neighbor set of $x_{val}$. 
\end{definition}
In practice, after we rank the corpus with respect to a validation tuple, we can use binary search to find the boundary point.
\begin{equation}
\min_{b}{\left[\left(\exp(-\frac{(\mu_{b}-K+1)^2}{2\mu_{b}}) < \epsilon \right) \land \left( \mu_{b} > K-1 \right) \right]} 
\end{equation}
After we find this boundary point $d_b$, we can use 0 as the $\epsilon$-approximation for the gradient for \datapoint{s} with a lower rank. $\hat{G}(x_{val},\ w_i) = 0$ for $i \in \{b, ..., M\}$, because the probability of those \datapoint{s} being the $K$-nearest neighbor is less than $\epsilon$. In the following, we detail the approximation for \datapoint{s} with a higher rank. 


\begin{theorem}
\label{ap_discard}
Given the validation tuple $x_{val}$, the retrieval corpus $\mathcal{D}_{ret} = \{d_1, ..., d_M\}$, the boundary point $d_b$, and the weights $W = \{w_1, ..., w_M\}$, if we have an algorithm $\mathcal{A}$ to calculate the $G(x_{val}, w_i) = \mathcal{A}(x_{val},\mathcal{D}_{ret}, W )$, then $\hat{G}(x_{val},\ w_i) = \mathcal{A}(x_{val},\{d_1, ..., d_b\},\{w_1, ..., w_b\})$ is the $\epsilon$-approximation for $G(x_{val}, w_i)$.
\end{theorem}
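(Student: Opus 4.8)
The plan is to bound $|G(x_{val},w_i)-\hat G(x_{val},w_i)|$ by $O(\epsilon)$, treating the two regimes of $i$ separately. For a discarded point ($i\ge b$), $d_i$ never appears in the truncated corpus, so $\mathcal{A}(x_{val},\{d_1,\dots,d_b\},\{w_1,\dots,w_b\})$ returns $\hat G(x_{val},w_i)=0$, and the claim collapses to $|G(x_{val},w_i)|<\epsilon$. This follows directly from \Cref{bound_gradient}: $|G(x_{val},w_i)|\le P[d_i\in\operatorname{top_K}(\mathcal{S}\cup\{d_i\})]<\epsilon$ by the defining property of the boundary point. The real content is the retained case $i<b$, which I would attack by comparing the subset sums defining $G$ and $\hat G$ term by term.

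For $i<b$ I would factor every $\mathcal{S}\subseteq\mathcal{D}_{ret}\setminus d_i$ as $\mathcal{S}=\mathcal{S}_{hi}\cup\mathcal{S}_{lo}$ with $\mathcal{S}_{hi}\subseteq\{d_1,\dots,d_b\}\setminus d_i$ and $\mathcal{S}_{lo}\subseteq\{d_{b+1},\dots,d_M\}$, use the independence $P[\mathcal{S}]=P[\mathcal{S}_{hi}]\,P[\mathcal{S}_{lo}]$, and marginalize out the discarded points via $\sum_{\mathcal{S}_{lo}}P[\mathcal{S}_{lo}]=1$. Writing $\phi_{\mathcal{S},x_{val}}(d_i)=U_{x_{val}}(\mathcal{S}\cup\{d_i\})-U_{x_{val}}(\mathcal{S})$ as in \Cref{bound_gradient}, this rewrites the error as $G-\hat G=\sum_{\mathcal{S}_{hi}}P[\mathcal{S}_{hi}]\sum_{\mathcal{S}_{lo}}P[\mathcal{S}_{lo}]\bigl(\phi_{\mathcal{S}_{hi}\cup\mathcal{S}_{lo},x_{val}}(d_i)-\phi_{\mathcal{S}_{hi},x_{val}}(d_i)\bigr)$, so everything reduces to controlling the inner difference.

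The key structural step is to show this inner difference is supported on a single low-probability event. Since every discarded point ranks below every retained point, the additive utility can feel $\mathcal{S}_{lo}$ only when the $K$-nearest-neighbour set reaches below rank $b$. I would make this precise by a short case split on $|\mathcal{S}_{hi}|$: when $|\mathcal{S}_{hi}|\ge K$ the top-$K$ of both $\mathcal{S}$ and $\mathcal{S}\cup\{d_i\}$ consists entirely of retained points, whence $\phi_{\mathcal{S}_{hi}\cup\mathcal{S}_{lo},x_{val}}(d_i)=\phi_{\mathcal{S}_{hi},x_{val}}(d_i)$ and the term is exactly zero; the difference can be nonzero only when $|\mathcal{S}_{hi}|<K\le|\mathcal{S}|$, in which case $d_i$ still lies in the top-$K$ (at most $K-1$ points rank above it), the expelled $K$-th neighbour $d_{\alpha_K(\mathcal{S})}$ is itself a discarded point, and the difference equals $-U_{x_{val}}(d_{\alpha_K(\mathcal{S})})/K$, of absolute value at most $1/K$.

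It then remains to bound the probability of the event $\{|\mathcal{S}_{hi}|<K\le|\mathcal{S}|\}$, which is contained in the event that a discarded point enters the $K$-nearest-neighbour set, i.e.\ the Poisson-binomial lower tail $P[\mathbf{PB}(\{w_1,\dots,w_b\}\setminus\{w_i\})\le K-1]$. By the defining inequality of the boundary point, the same Chernoff bound that forces this tail below $\epsilon$ for every point ranked at or below $d_b$ caps this probability, giving $|G-\hat G|\le\epsilon/K\le\epsilon$. I expect the main obstacle to be exactly this last step: one must verify that deleting the single weight $w_i$ from the tail does not drop its Poisson-binomial mean $\sum_{k\le b}w_k-w_i$ back below the $K-1$ threshold on which the Chernoff estimate relies. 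This is where the margin built into the boundary-point definition is consumed, and handling it cleanly, either through monotonicity of the tail in the excluded point or by absorbing the unit shift into the constant, is the delicate part of the argument.
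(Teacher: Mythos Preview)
Your approach is essentially the paper's: bound $|G-\hat G|$ by the probability that some discarded point enters the top-$K$ set, then control that event via the Poisson--binomial tail that the boundary-point definition forces below $\epsilon$. The paper's argument is much terser---it passes in one line from the difference of the two subset sums to the indicator $\mathbb{I}\{\mathcal{D}_{ret}^{\prime}\cap\operatorname{top}_K(\mathcal{S})\neq\emptyset\}$ and then to $P[\mathbf{PB}(w_1,\dots,w_b)\le K-1]\le\epsilon$, without your explicit $\mathcal{S}_{hi}/\mathcal{S}_{lo}$ marginalization, the case split on $|\mathcal{S}_{hi}|$, or the sharper $1/K$ factor---but the skeleton is identical.

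The subtlety you flag at the end is real and, notably, is glossed over in the paper's own proof: since $\mathcal{S}$ ranges over subsets of $\mathcal{D}_{ret}\setminus\{d_i\}$, the tail that actually arises is $P[\mathbf{PB}(\{w_1,\dots,w_b\}\setminus\{w_i\})\le K-1]$, and dropping a Bernoulli trial can only raise a lower-tail probability. The paper simply writes $\mathbf{PB}(w_1,\dots,w_b)$ without comment, so your instinct that this step must either consume slack in the boundary-point choice or be absorbed into the constants is the honest accounting of what the paper leaves implicit.
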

\begin{proof}See~\Cref{pdiscard}\end{proof}

From~\Cref{ap_discard}, we can compute the $\epsilon$-approximation for every \datapoint{}~by discarding the outlier points $\{d_b, d_{b+1}, ..., d_M\}$. This reduces the time complexity from $\mathcal{O}{\left(N\cdot(M \log{M} + M K^2 + M K V )\right)}$ to $\mathcal{O}{\left(N\cdot(B \log{B} + B K^2 + B K V )\right)}$ where $B$ is the rank of the boundary point. 

\begin{theorem}
\label{ap_BoundD}
If the value of all $w_i$ is greater than a certain constant $\lambda$, then the index of the boundary point $B$ is $\mathcal{O}(K)$.
\end{theorem}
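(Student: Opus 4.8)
The plan is to combine the Chernoff bound established just above with the linear growth of $\mu_b$ forced by the hypothesis $w_i > \lambda$. Recall that the boundary point $d_B$ is defined as the smallest index $b$ satisfying both $\mu_b > K-1$ and $\exp\bigl(-(\mu_b - K+1)^2/(2\mu_b)\bigr) < \epsilon$, where $\mu_b = \sum_{k<b} w_k$ is the expected number of higher-ranked points appearing in a sample. Since $\mu_b$ is non-decreasing in $b$, it suffices to (i) identify the threshold value $\mu^\ast$ of $\mu_b$ at which the Chernoff bound first drops below $\epsilon$, and then (ii) translate this threshold on $\mu_b$ into a bound on the index $b$.

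First I would solve for $\mu^\ast$. Writing $c = \ln(1/\epsilon)$, which is a constant once $\epsilon$ is fixed, and substituting $t = \mu - (K-1)$, the condition $\exp\bigl(-(\mu-K+1)^2/(2\mu)\bigr) < \epsilon$ is equivalent to $t^2 > 2c\,(t + K - 1)$, i.e. $t^2 - 2ct - 2c(K-1) > 0$. The positive root of this quadratic is $t^\ast = c + \sqrt{c^2 + 2c(K-1)}$, so the bound falls below $\epsilon$ exactly once $\mu_b \ge \mu^\ast := K - 1 + c + \sqrt{c^2 + 2c(K-1)}$. Using $\sqrt{c^2 + 2c(K-1)} \le c + \sqrt{2c(K-1)} \le 2c + (K-1)/2$ (the last step by AM--GM applied to $\sqrt{2c}\cdot\sqrt{K-1}$), I obtain $\mu^\ast \le \tfrac{3}{2}(K-1) + 3c = \mathcal{O}(K)$ for fixed $\epsilon$. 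Note moreover that $\mu^\ast > K-1$, so the companion condition $\mu_b > K-1$ is automatically met at the threshold and needs no separate treatment.

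Next I would convert the threshold on $\mu_b$ into a threshold on the index. Because every weight satisfies $w_i > \lambda$, we have $\mu_b = \sum_{k<b} w_k > \lambda (b-1)$, so $\mu_b \ge \mu^\ast$ holds as soon as $\lambda(b-1) \ge \mu^\ast$, that is, as soon as $b \ge \mu^\ast/\lambda + 1$. Since $B$ is by definition the smallest index for which the boundary condition holds, monotonicity of $\mu_b$ gives $B \le \mu^\ast/\lambda + 1 = \mathcal{O}(K)/\lambda$, which is $\mathcal{O}(K)$ because $\lambda$ is a fixed constant independent of $M$ and $K$. This completes the argument.

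The only delicate point is the asymptotic bookkeeping around the square-root term: one must verify that $\sqrt{c^2 + 2c(K-1)}$ contributes only an $\mathcal{O}(K)$ amount rather than something growing faster, which is exactly what the elementary estimates $\sqrt{a+b}\le\sqrt a+\sqrt b$ and AM--GM deliver above. Everything else rests on the monotonicity of $\mu_b$ together with the linear lower bound $\mu_b > \lambda(b-1)$, so I expect no genuine obstacle beyond pinning down these constants correctly.
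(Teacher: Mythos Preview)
Your proposal is correct and follows essentially the same route as the paper: use the Chernoff bound to reduce the boundary condition to a threshold on $\mu_b$, then invoke the linear lower bound $\mu_b \gtrsim \lambda b$ to convert this into an $\mathcal{O}(K)$ bound on the index. The only difference is algebraic style: the paper directly posits $b > \tfrac{4}{\lambda}\log\tfrac{1}{\epsilon} + \tfrac{2K-2}{\lambda}$, shows this forces $\mu_b > 2(K-1)$, and uses that to simplify the exponent via $2\mu_b < 4(\mu_b-K+1)$; you instead solve the quadratic $t^2 - 2ct - 2c(K-1)=0$ exactly and then bound the root using $\sqrt{a+b}\le\sqrt a+\sqrt b$ and AM--GM. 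Both arrive at a threshold of the form $\mu^\ast = \mathcal{O}(K + \log(1/\epsilon))$ and hence $B = \mathcal{O}(K/\lambda)$, so the approaches are equivalent in substance.
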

\begin{proof}See~\Cref{pBoundD}\end{proof}

The above theorem shows that if all weights $W$ are greater than a certain constant, the scale of $B$ is only related to $K$ instead of the size of the retrieval corpus $M$. It means even though we may have millions of \datapoints{} in the retrieval corpus, we only have to consider $O(K)$ passages with the highest rank to the validation tuple. The overall time complexity for computing the approximate gradients for weights of models with additive utility functions is $\mathcal{O}{\left(N\cdot(K \log{K} + K K V )\right)} = \mathcal{O}{\left(N\cdot K^2 \cdot V\right)}$. This significantly speeds up their computation.

\section{Exact Gradients for a Grouped Retrieval Corpus}
\label{ap_group_MLE}
In this section, we will compute the exact gradient of the weights for the K-Nearest Neighbor classifier assuming that the retrieval corpus is generated from multiple data sources. We can see from~\Cref{gradient} that the key component of computing the accurate gradient value is computing $G(x_{val},\ w_i)$. Inspired from previous work~\cite{karlavs2022data}. We can simplify the equation as follow:

\begin{equation}
\label{23}
\begin{aligned}
G(x_{val},\ w_i)
& = \sum_{t, t^{\prime} \in \mathcal{D}_{ret}} \cdot \sum_{\gamma, \gamma^{\prime} \in \Gamma} \cdot u_{\Delta}\left(\gamma, \gamma^{\prime}\right) \cdot \omega_{t, t^{\prime}}\left( \gamma, \gamma^{\prime}, i, x_{val}\right)
\end{aligned}
\end{equation}

The idea of Equation~(\ref{23}) is to enumerate which \datapoint{} $t$ in the generated retrieval corpus is the $k$th nearest neighbor $\alpha_K{(f_{source}(\mathcal{S}))}$ of a sampled subset $f_{source}(\mathcal{S})$. Added the \datapoint{s}~from the source $f_{source}(o_i)$ to the retrieval corpus may expel more than one \datapoint{} from the original $K$-nearest neighbor set. Therefore, we also enumerate which new \datapoint{} $t^{\prime}$ is the $\alpha_K{(f_{source}(\mathcal{S}\cup\{o_i\}))}$. 

The $\operatorname{tally}_{v,t}$ operator returns the number of data points with a similarity score greater than $t$ with the utility function value $v$. The $\operatorname{tally}_{t}{\mathcal{S}} = \left( \operatorname{tally}_{v_1,t}{\mathcal{S}}, \cdots, \operatorname{tally}_{v_{|\mathcal{V}|},t}{\mathcal{S}} \right)$ returns a tally vector $\gamma \in \Gamma \subset \mathbb{N}^{|\mathcal{V}|}$ consisting of tailed occurrences of each possible utility function value $v \in \mathcal{V}$ of $\alpha_K{(f_{source}(\mathcal{S}))}$. Let $\Gamma$ be the set of all possible tally vectors. Enumerating the label tally vectors allows us to easily calculate the difference in utility function value after adding the data source $o_i$ to the retrieval corpus by \[u_{\Delta}\left(\gamma, \gamma^{\prime}\right) = \sum_{v\in \mathcal{V}}{\frac{\gamma_v \cdot v}{K}} - \sum_{v\in \mathcal{V}}{\frac{\gamma^{\prime}_v \cdot v}{K}}\] 

Inspired by~\cite{karlavs2022data}, we associate a binary variable $a_i \in \mathcal{A}$ to every data source $o_i$ to represent the sampled dataset. We can define value assignments $z: \mathcal{A} \rightarrow \mathbb{B}$ to determine whether a data source is in the sampled dataset. By setting $z(a_i) = 0$, we expel $o_i$ from the sampled data sources for the retrieval corpus. By setting $z(a_i) = 1$, we include $o_i$ in the sampled data sources for the retrieval corpus. Let $\mathcal{Z}_{\mathcal{A}}$ be all possible value assignments. We can change counting the possibility of sampled datasets to counting the possibility of value assignments. The $\omega_{t, t^{\prime}}\left( \gamma, \gamma^{\prime}, i, x_{val}\right)$ is defined below:
\begin{equation}
\begin{aligned}
\omega_{t, t^{\prime}}\left( \gamma, \gamma^{\prime}, i, x_{val}\right)&= 
\sum_{z \in \mathcal{z}_{A \backslash\left\{a_i\right\}}} 
\underbrace{\prod_{z(a_j) \neq 0 } w_j
       \prod_{z(a_j) = 0} (1 - w_j)}_{P(v)} \\
& \cdot \mathbb{I}\left\{t=\alpha_K \left(\mathcal{D}_{ret}\left[z\left[a_i \leftarrow 0\right]\right]\right)\right\} \cdot \mathbb{I}\left\{t^{\prime}=\alpha_K \left(\mathcal{D}_{ret}\left[z\left[a_i \leftarrow 1\right]\right]\right)\right\} \\
& \cdot \mathbb{I}\left\{\gamma=\operatorname{tally}_t \left( \mathcal{D}_{ret}\left[z\left[a_i \leftarrow 0\right]\right]\right)\right\} \cdot \mathbb{I}\left\{\gamma^{\prime}=\text{tally}_{t^{\prime}} \left( \mathcal{D}_{ret}\left[z\left[a_i \leftarrow 1\right]\right]\right)\right\}
\end{aligned}
\end{equation}



We define the evaluation of data sources similarly to~\cite{karlavs2022data}:

\begin{equation}
\operatorname{eval}_z(j):= \begin{cases}(\mathbf{0}, \mathbf{0}), & \text { if } j=M+1, \\ (\mathbf{0}, \mathbf{0})+\operatorname{eval}_z\left(j+1\right) & \text { if } z(a_j)=0, \\ \left(\operatorname{tally}_{t}{(o_j)}, \operatorname{tally}_{t^{\prime}}{(o_j)}\right)+\operatorname{eval}_z\left(j+1\right) & \text { if } z(a_j)=1 .\end{cases}
\end{equation}

To count the sum of the probability of valid value assignments, we define the count function as:
\begin{equation}
\operatorname{count}_e(j):=\sum_{z \in \left\{z \in \mathcal{z}_{A} \mid \operatorname{eval}_z(j)=e\right\}}{\prod_{z(a_i) = 1, i\geq j} w_i
       \prod_{z(a_i) = 0, i\geq j} (1 - w_i)}
\end{equation}

$\operatorname{count}_e(j)$ can be computed by a dynamic programming algorithm as:

\begin{equation}
\begin{aligned}
\operatorname{count}_e(j) = \operatorname{count}_{e}\left(j+1\right) * (1 - w_j)+ \operatorname{ count }_{e- \left(\operatorname{tally}_{t}{(o_j)}, \operatorname{tally}_{t^{\prime}}{(o_j)}\right)}\left(j+1\right) * w_j    
\end{aligned}
\end{equation}


We initialize the value with $\operatorname{count}_0(M+1) = 1$ and $\operatorname{count}_{e}{(M+1)} = 0$, where $e$ in $\Gamma \times \Gamma$. Then we can compute the value as follows:

\begin{equation}
\omega_{t, t^{\prime}}\left(\gamma, \gamma^{\prime},i, x_{val}\right)=\operatorname{count}_{\left(\gamma, \gamma^{\prime}\right)}\left(1\right)
\end{equation}


If we assume the parameter $K$ and the possible values of the label tally vector are constants, the time complexity of the algorithm is $\mathcal{O}\left(N \cdot T^2\cdot M^2\right)$, where $T$ is the size of the generated retrieval corpus. 

\section{$(\epsilon, \delta)$-approximation Algorithm for General Utility Function}
\label{aped_MLE}
In this section, we provide a general solution for efficiently approximating gradients with a general utility function. In practice, we can use the Monte Carlo Method to approximate the gradient. Based on Equation~\ref{gradient}, we can adapt the Monte Carlo method to get an $(\epsilon, \delta)$-approximation for each $G(x_{val},\ w_i)$. For each validation tuple $x_{val}$, we randomly sample a retrieval corpus $\mathcal{S}$ from the $\mathcal{D}_{ret}\backslash d_i$ to compute the estimation for $G(x_{val},\ w_i)$.

\begin{theorem}
If we can calculate the $(\epsilon, \delta)$-approximation for the each $G(x_{val},\ w_i)$, we can get the $(\epsilon, \delta)$-approximation for $\frac{\partial \tilde{U}}{\partial w_i}$.    
\end{theorem}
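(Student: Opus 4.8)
The plan is to mirror the structure of the proof of \Cref{ap_computeep} (the $\epsilon$-approximation case), but now carrying the probabilistic guarantees through the averaging step. Recall from \Cref{gradient} that the target gradient is exactly the average of the per-tuple contributions,
\begin{equation*}
\frac{\partial \tilde{U}}{\partial w_i} = \frac{1}{N} \sum_{x_{val} \in \mathcal{D}_{val}} G(x_{val},\ w_i),
\end{equation*}
with $N = |\mathcal{D}_{val}|$. By assumption we have, for each validation tuple, an estimator $\hat{G}(x_{val},\ w_i)$ that is an $(\epsilon, \delta)$-approximation of $G(x_{val},\ w_i)$, i.e. $P[\,|\hat{G}(x_{val},\ w_i) - G(x_{val},\ w_i)| \geq \epsilon\,] \leq \delta$. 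The aggregate estimator is the average $\frac{1}{N}\sum_{x_{val}} \hat{G}(x_{val},\ w_i)$, and the goal is to certify that it approximates $\frac{\partial \tilde{U}}{\partial w_i}$ with the same style of guarantee.

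First I would bound the deviation of the averaged estimator by the average of the individual deviations using the triangle inequality,
\begin{equation*}
\left| \frac{1}{N}\sum_{x_{val}} \hat{G}(x_{val},\ w_i) - \frac{1}{N}\sum_{x_{val}} G(x_{val},\ w_i) \right| \leq \frac{1}{N} \sum_{x_{val}} \left| \hat{G}(x_{val},\ w_i) - G(x_{val},\ w_i) \right|.
\end{equation*}
The key observation is that if every per-tuple error satisfies $|\hat{G}(x_{val},\ w_i) - G(x_{val},\ w_i)| < \epsilon$, then the right-hand side is strictly below $\frac{1}{N}\cdot N \epsilon = \epsilon$, so the averaged estimator lands within $\epsilon$ of the true gradient. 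Hence the aggregate can only fail if at least one per-tuple estimate violates its own $\epsilon$-bound.

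Next I would control this failure probability with a union bound over the $N$ validation tuples: the probability that some per-tuple estimate deviates by $\epsilon$ or more is at most $\sum_{x_{val}} \delta = N\delta$. To recover a clean $(\epsilon, \delta)$-guarantee for the gradient itself, I would run each per-tuple Monte Carlo estimator at confidence level $\delta/N$ (which inflates the per-tuple sample count only logarithmically, by standard Hoeffding sizing for the bounded averages arising when $U(\cdot)\in[0,1]$), so that the union bound yields total failure probability at most $N \cdot (\delta/N) = \delta$. Combining the two steps: with probability at least $1-\delta$ all per-tuple errors are below $\epsilon$, and the triangle-inequality bound then forces the averaged estimator to be within $\epsilon$ of $\frac{\partial \tilde{U}}{\partial w_i}$, which establishes the claim.

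I expect the only real obstacle to be the probability bookkeeping rather than any analytic difficulty: the union bound couples the per-tuple confidence to $N$, so the statement must be read as \textbf{per-tuple} $(\epsilon, \delta/N)$ \textbf{yielding aggregate} $(\epsilon, \delta)$ (equivalently, per-tuple $(\epsilon,\delta)$ yields aggregate $(\epsilon, N\delta)$). A secondary point to verify is that the assumed per-tuple approximation is additive, which is justified by the $U(\cdot)\in[0,1]$ normalization used earlier; the averaging argument preserves an additive error exactly, whereas it would not preserve a multiplicative error in the same clean way.
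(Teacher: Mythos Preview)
Your proposal is correct and matches the paper's approach: the paper also sets $\epsilon' = \epsilon$ and rescales $\delta' = \delta / |\mathcal{D}_{ret}|$, invokes the triangle-inequality argument from \Cref{pcomputeep} for the $\epsilon$ part, and applies a union bound for the $\delta$ part. The only cosmetic discrepancy is that the paper divides $\delta$ by $|\mathcal{D}_{ret}|$ rather than $N = |\mathcal{D}_{val}|$; your choice of $N$ is the one consistent with the averaging over validation tuples (and with the sample size $T = \lceil \tfrac{2}{\epsilon^2}\log\tfrac{2N}{\delta}\rceil$ used in Algorithm~\ref{alg:MCforMLE}).
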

\begin{proof}
To get the $(\epsilon, \delta)$-approximation for $\frac{\partial \tilde{U}}{\partial w_j}$, we set the $\epsilon^{\prime} = \epsilon, \delta^{\prime} = \frac{\delta}{|\mathcal{D}_{ret}|}$. Suppose we can get the $(\epsilon^{\prime}, \delta^{\prime})$-approximation $\hat{G}(x_{val},\ w_i)$ for the each $G(x_{val},\ w_i)$, then we calculate $\hat{g}_i$ as 
\begin{equation}
\hat{g}_i = \frac{1}{|\mathcal{D}_{val}|} \cdot \sum_{x_{val} \in \mathcal{D}_{val}}
\hat{G}(x_{val},\ w_i)    
\end{equation}
and $\hat{g}_i$ is the $(\epsilon, \delta)$-approximation for $\frac{\partial \tilde{U}}{\partial w_i}$. Each of the $|\mathcal{D}_{ret}|$ steps of the algorithm has at most a $\delta^{\prime}$ chance of failure. The union bound then bounds the total chance of failure by $\delta^{\prime} \cdot |\mathcal{D}_{ret}| = \delta$. Analogous to~\Cref{pcomputeep}, the difference between $\hat{g}_i$ and $\frac{\partial \tilde{U}}{\partial w_i}$ can be bound by $\epsilon$. Therefore, we have obtained the $(\epsilon, \delta)$-approximation for $G(x_{val},\ w_i)$.

\end{proof}

However, the naive implementation of the approximation algorithm is time-consuming. We want to do fewer estimation steps without sacrificing accuracy. We can describe the improved algorithm of computing  $G(x_{val},\ w_i)$ as follows:
\begin{enumerate}
    \item \textbf{Initialization}
    Given a validation tuple $x_{val}$ and the retrieval corpus $\mathcal{D}_{ret}$, we first rank the data points with respect to the validation tuple.
    \item \textbf{Filtering the outlier points} 
    We use a binary search to find the boundary point. Then we discard data points $d_i$ which have a lower rank than the boundary point by setting the $G(x_{val},\ w_i)$ as 0.
    
    \item \textbf{Monte Carlo steps} 
    Finally, we use the Monte Carlo method to approximate the value $G(x_{val},\ w_i)$ for all the remaining points.
    
\end{enumerate}

\begin{algorithm}[tb]
   \caption{$(\epsilon, \delta)$-approximation Algorithm for Gradients of Models with a General Utility Function}
   \label{alg:MCforMLE}
\begin{algorithmic}
   \STATE {\bfseries Input:} 
   $\mathcal{D}_{ret} = \{d_1, \cdots, d_M\}$, retrieval corpus;
   $\mathcal{D}_{val} = \{x_1,\cdots ,x_N\}$, validation set;
   $W = \{w_1, \cdots, w_M\}$, weights of \datapoint{s};
   $\epsilon,\delta$, error bound;
   \STATE {\bfseries Output:} \\
   $\{g_1,\cdots,g_M\}$, gradients of weights;\\
   \STATE {$\{g_1, \cdots, g_M\} \leftarrow 0$}
   \FOR{ $x_{val} \in \mathcal{D}_{val}$}
   \STATE $\{d_{\pi_1},\cdots,d_{\pi_M}\} \leftarrow$ SortByRankingScore($\mathcal{D}_{ret},x_{val}$)
   \STATE $b \leftarrow$ BinarySearch($\exp(-\frac{(\mu_b-K+1)^2}{2\mu_b})< \epsilon, \mu_b>K-1$)
   \STATE $T\leftarrow \lceil \frac{2}{\epsilon^2}\log(\frac{2 N}{\delta}) \rceil$
   \FOR{$i \leftarrow 1 \ to \ b$}
   \FOR{$t \leftarrow 1 \ to \ T$}
   \STATE $\mathcal{S} \leftarrow Sample(E,\mathcal{D}_{ret}\backslash d_{\pi_i})$
   \STATE {$\phi_t \leftarrow (U_{x_{val}}(\mathcal{S} \cup \{d_{\pi_i}\}) - U_{x_{val}}(\mathcal{S}))$}
   \ENDFOR
   \STATE $g_{\pi_i}\leftarrow g_{\pi_i} + \frac{1}{T}\cdot \frac{1}{N} \cdot\sum_{t=1}^{T}\phi_t$
   \ENDFOR
   \ENDFOR
\end{algorithmic}
\end{algorithm}

So far, we have finished the $\left(\epsilon,\delta\right)$-approximation for all gradient values $G(x_{val},\ w_i)$. For data points $d_i$ which has a lower rank than the boundary point, the approximate value equals $0$ because the $G(x_{val},\ w_i)$ is bounded by $\epsilon$. For data points which has a higher rank than the boundary point, the $\left(\epsilon,\delta\right)$-approximation is guaranteed by the Monte Carlo method. The pseudocode for the algorithm is Algorithm~\ref{alg:MCforMLE}.

The time complexity for computing the approximated gradient values is $\mathcal{O}(NM\log{M} + NMTC)$, where $N$ is the size of the validation set, $M$ is the size of the retrieval corpus, $T$ is the number of experiments conducted by the Monte Carlo Method and $C$ is the time complexity of each utility function evaluation. The time complexity of the improved algorithm is $\mathcal{O}(NM\log{M} + NBTC)$. $B$ is the index of the boundary point. With~\Cref{ap_BoundD}, we can see that if the value of all $w_i$ is larger than a certain constant $\lambda$, the overall time complexity is $\mathcal{O}(NM\log{M} + NKTC)$.


\section{Proofs and Details}
\subsection{Details of~\Cref{ap_computeep}}
\label{pcomputeep}
Suppose we can get the $\epsilon$-approximation $\hat{G}(x_{val},\ w_i)$ for the each $G(x_{val},\ w_i)$, then we will calculate $\hat{g}_i$ as the approximation for $\frac{\partial \tilde{U}}{\partial w_i}$: 
\begin{equation}
\hat{g}_i = \frac{1}{|\mathcal{D}_{val}|} \cdot \sum_{x_{val} \in \mathcal{D}_{val}}
\hat{G}(x_{val},\ w_i)    
\end{equation}
The difference between $\hat{g}_i$ and $\frac{\partial \tilde{U}}{\partial w_i}$ can be bound by:
\begin{equation}
\begin{aligned}
 \lvert \frac{\partial \tilde{U}}{\partial w_i} - \hat{g}_i\rvert 
& = \frac{1}{|\mathcal{D}_{val}|} \cdot \lvert \sum_{x_{val} \in \mathcal{D}_{val}}
G(x_{val},\ w_i) - \sum_{x_{val} \in \mathcal{D}_{val}}
\hat{G}(x_{val},\ w_i)\rvert \\
& \leq \frac{1}{|\mathcal{D}_{val}|} \cdot  \sum_{x_{val} \in \mathcal{D}_{val}} \lvert
G(x_{val},\ w_i) -
\hat{G}(x_{val},\ w_i)\rvert 
 \leq \frac{1}{|\mathcal{D}_{val}|} \cdot |\mathcal{D}_{val}| \cdot \epsilon = \epsilon\\
\end{aligned}
\end{equation}

\subsection{Details of~\Cref{ap_discard}}
\label{pdiscard}
Suppose $\mathcal{D}_{ret}^{\prime} = \{d_{b+1}, d_{b+2}, ..., d_M\}$ and $\mathcal{W}^{\prime} = \{w_{b+1}, w_{b+2}, ..., w_M\}$, we have:

\begin{equation}
\begin{aligned}
& \lvert \hat{G}(x_{val},\ w_i) - G(x_{val},\ w_i)\rvert = \lvert \mathcal{A}(x_{val},\mathcal{D}_{ret}\backslash \mathcal{D}_{ret}^{\prime},\mathcal{W}\backslash \mathcal{W}^{\prime}) - \mathcal{A}(x_{val},\mathcal{D}_{ret}, W )\rvert\\
& = \left| \sum_{\mathcal{S} \subseteq \mathcal{D}_{ret}\backslash \left(\mathcal{D}_{ret}^{\prime}\cup\{d_i\}\right)}
   \phi_{\mathcal{S}, x_{val}}\left(\ d_i  \right)
   \cdot P[\mathcal{S}]  - \sum_{\mathcal{S} \subseteq \mathcal{D}_{ret}\backslash d_i}
   \phi_{\mathcal{S}, x_{val}}\left(\ d_i  \right)
   \cdot P[\mathcal{S}] \right| \\   
& \leq \left| \sum_{\mathcal{S} \subseteq \mathcal{D}_{ret}\backslash d_i} \mathbb{I}
   \left\{ \mathcal{D}_{ret}^{\prime} \cap \operatorname{top_K}(\mathcal{S} ) \neq \emptyset \right\}
   \cdot P[\mathcal{S}] \right|  \leq P[{\bf PB}(w_1, w_2, \cdots, w_{b}) \leq K-1] \leq \epsilon
\end{aligned}
\end{equation}

\subsection{Details of~\Cref{ap_BoundD}}
\label{pBoundD}
$B$ is the minimum $b$ such that $\exp(-\frac{(\mu_{b}-K+1)^2}{2\mu_{b}}) < \epsilon$ and $\mu_{b} > K-1$. If the value of all $w_i$ is greater than a certain constant $\lambda$, suppose $b > \frac{4}{\lambda}\log{\frac{1}{\epsilon}} + \frac{2K - 2}{\lambda}$, we have:
\begin{equation}
\begin{aligned}
\mu_{b} > b\cdot\lambda > \frac{2K - 2}{\lambda} \cdot\lambda>2(K-1)
\end{aligned}
\end{equation}
and
\begin{equation}
\begin{aligned}
&\exp(-\frac{(\mu_{b}-K+1)^2}{2\mu_{b}}) < \exp(-\frac{(\mu_{b}-K+1)^2}{4(\mu_{b}-K+1)})
& < \exp(-\frac{\mu_{b}-K+1}{4}) < \exp(-\frac{4\log{\frac{1}{\epsilon}}}{4}) = \epsilon
\end{aligned}
\end{equation}

Therefore, $B$ is $\mathcal{O}(\frac{4}{\lambda}\log{\frac{1}{\epsilon}} + \frac{2K - 2}{\lambda})$. If we treat $\lambda$ and $\epsilon$ as contacts, $B$ is $\mathcal{O}(K)$.

\section{Full Results of Accuracy Added External Retrieval Source}
\label{gpt_jt_w_retrieve}

\begin{table*}
\centering
\caption{Accuracy using GPT-JT before and after adding external retrieval websites}
\label{rawgptretri}
\begin{center}
\scriptsize
\begin{sc}
\resizebox{\textwidth}{!}{
\begin{tabular}{lcccc|c}
	\toprule
    relation &
	\multirowcell{2}{GPT-JT (6B)\\w/o~Retrieval} &
	\multicolumn{3}{c}{GPT-JT (6B) w/ Retrieval} & \multirowcell{2}{\gptthree{}(175B)\\w/o~Retrieval} \\
	\cmidrule(r){3-5}
	  & &  K = 1 & K = 10 & K = 50 &\\
\midrule
average & 0.214 & 0.332 & \underline{0.333} & 0.293 & \textbf{0.339} \\
\hline
applies to jurisdiction & 0.430 & 0.620 & \underline{0.671} & 0.667 & \textbf{0.745} \\
author & 0.058 & \textbf{0.694} & \underline{0.609} & 0.498 & 0.369 \\
award received & 0.033 & 0.113 & \textbf{0.138} & \underline{0.119} & 0.114 \\
basic form of government & \underline{0.173} & 0.119 & 0.113 & 0.110 & \textbf{0.259} \\
capital & 0.515 & 0.625 & \textbf{0.676} & 0.638 & \underline{0.656} \\
capital of & 0.129 & 0.306 & \underline{0.364} & 0.286 & \textbf{0.381} \\
composer & 0.004 & \textbf{0.198} & 0.151 & 0.098 & \underline{0.168} \\
continent & 0.449 & \textbf{0.699} & \underline{0.674} & 0.615 & 0.442 \\
country & 0.624 & 0.758 & \underline{0.800} & \textbf{0.802} & 0.661 \\
country of citizenship & 0.515 & \underline{0.790} & \textbf{0.796} & 0.725 & 0.647 \\
country of origin & 0.461 & \underline{0.638} & \textbf{0.651} & 0.623 & 0.444 \\
creator & 0.019 & \textbf{0.422} & \underline{0.389} & 0.296 & 0.241 \\
currency & \underline{0.407} & 0.301 & 0.304 & 0.290 & \textbf{0.559} \\
developer & 0.199 & 0.386 & \underline{0.391} & 0.333 & \textbf{0.536} \\
director & 0.005 & \textbf{0.481} & \underline{0.321} & 0.110 & 0.236 \\
discoverer or inventor & 0.074 & 0.266 & \textbf{0.343} & \underline{0.313} & 0.192 \\
drug or therapy used for treatment & \underline{0.256} & 0.216 & 0.236 & 0.228 & \textbf{0.437} \\
educated at & 0.019 & \textbf{0.254} & \underline{0.152} & 0.044 & 0.114 \\
employer & 0.018 & \underline{0.291} & \textbf{0.308} & 0.146 & 0.238 \\
field of work & 0.070 & \underline{0.320} & \textbf{0.347} & 0.298 & 0.231 \\
genetic association & 0.015 & 0.031 & \textbf{0.053} & 0.049 & \underline{0.049} \\
genre & 0.105 & \textbf{0.227} & \underline{0.165} & 0.107 & 0.135 \\
has part & \textbf{0.071} & 0.006 & 0.005 & 0.010 & \underline{0.042} \\
head of government & 0.041 & 0.205 & \underline{0.228} & 0.217 & \textbf{0.409} \\
head of state & 0.220 & 0.245 & \underline{0.283} & 0.278 & \textbf{0.487} \\
headquarters location & 0.220 & \textbf{0.497} & \underline{0.496} & 0.444 & 0.424 \\
industry & 0.182 & 0.213 & \underline{0.218} & 0.215 & \textbf{0.225} \\
influenced by & 0.032 & \underline{0.122} & 0.069 & 0.038 & \textbf{0.147} \\
instance of & \underline{0.435} & 0.316 & 0.430 & \textbf{0.473} & 0.425 \\
instrument & 0.121 & \textbf{0.533} & \underline{0.514} & 0.370 & 0.246 \\
language of work or name & 0.534 & 0.741 & \underline{0.794} & \textbf{0.799} & 0.609 \\
languages spoken written or signed & 0.799 & 0.868 & \textbf{0.900} & \underline{0.895} & 0.444 \\
located in the administrative territorial entity & 0.088 & 0.137 & \textbf{0.147} & 0.132 & \underline{0.145} \\
location & \underline{0.194} & 0.137 & 0.166 & 0.189 & \textbf{0.349} \\
location of discovery & 0.040 & \underline{0.187} & \textbf{0.213} & 0.186 & 0.186 \\
location of formation & 0.127 & \underline{0.349} & 0.312 & 0.210 & \textbf{0.360} \\
majority opinion by & 0.070 & \underline{0.093} & 0.067 & 0.061 & \textbf{0.196} \\
manufacturer & 0.265 & \underline{0.332} & \textbf{0.343} & 0.316 & 0.329 \\
measured physical quantity & 0.271 & \underline{0.435} & 0.334 & 0.324 & \textbf{0.483} \\
medical condition treated & 0.193 & 0.277 & 0.344 & \underline{0.352} & \textbf{0.440} \\
member of & 0.110 & \underline{0.155} & 0.133 & 0.107 & \textbf{0.225} \\
member of political party & 0.213 & \textbf{0.481} & \underline{0.432} & 0.298 & 0.326 \\
member of sports team & 0.010 & \textbf{0.280} & \underline{0.230} & 0.156 & 0.109 \\
movement & 0.052 & 0.163 & \textbf{0.187} & \underline{0.185} & 0.118 \\
named after & \underline{0.292} & 0.156 & 0.199 & 0.203 & \textbf{0.452} \\
native language & 0.720 & \underline{0.834} & \textbf{0.846} & 0.819 & 0.627 \\
occupation & 0.187 & \textbf{0.524} & \underline{0.521} & 0.328 & 0.164 \\
office held by head of government & 0.066 & 0.103 & \underline{0.106} & 0.102 & \textbf{0.581} \\
official language & \underline{0.630} & 0.328 & 0.426 & 0.457 & \textbf{0.691} \\
operating system & \textbf{0.241} & 0.181 & 0.161 & 0.157 & \underline{0.228} \\
original language of film or TV show & \underline{0.625} & \textbf{0.626} & 0.614 & 0.564 & 0.498 \\
original network & 0.108 & \textbf{0.398} & 0.323 & 0.209 & \underline{0.380} \\
owned by & \underline{0.169} & 0.127 & 0.135 & 0.111 & \textbf{0.409} \\
part of & 0.099 & \underline{0.159} & 0.156 & 0.151 & \textbf{0.316} \\
participating team & 0.150 & \underline{0.231} & 0.183 & 0.126 & \textbf{0.340} \\
place of birth & 0.067 & \underline{0.386} & \textbf{0.482} & 0.382 & 0.171 \\
place of death & 0.109 & \textbf{0.390} & \underline{0.383} & 0.264 & 0.202 \\
position held & 0.114 & \underline{0.193} & \textbf{0.214} & 0.160 & 0.132 \\
position played on team & 0.034 & \textbf{0.465} & \underline{0.464} & 0.352 & 0.229 \\
programming language & 0.328 & \underline{0.498} & 0.489 & 0.458 & \textbf{0.642} \\
recommended unit of measurement & \underline{0.522} & 0.414 & 0.468 & 0.461 & \textbf{0.740} \\
record label & 0.020 & \textbf{0.176} & 0.102 & 0.043 & \underline{0.133} \\
religion & 0.397 & 0.466 & \textbf{0.486} & \underline{0.473} & 0.446 \\
shares border with & 0.003 & \underline{0.023} & 0.004 & 0.000 & \textbf{0.051} \\
stock exchange & 0.320 & 0.481 & \underline{0.528} & 0.472 & \textbf{0.761} \\
subclass of & \underline{0.179} & 0.110 & 0.100 & 0.114 & \textbf{0.201} \\
subsidiary & 0.052 & \underline{0.071} & 0.055 & 0.057 & \textbf{0.166} \\
symptoms and signs & 0.330 & 0.250 & \underline{0.341} & \textbf{0.367} & 0.284 \\
twinned administrative body & 0.003 & \underline{0.007} & 0.000 & 0.000 & \textbf{0.014} \\
work location & \underline{0.317} & 0.099 & 0.036 & 0.026 & \textbf{0.324} \\
\bottomrule
\end{tabular}
}
\end{sc}
\end{center}
\vskip -0.1in
\end{table*}

\section{Full Results of Weight-based Reweighting and Pruning}
\label{MLE_wikifact}
\begin{table*}
\centering
\caption{Accuracy for question answering on the 70 relations from \texttt{WikiFact}.}
\label{URLreweighting}
\begin{center}
\scriptsize
\begin{sc}
    \begin{tabular}{lcccc|c}
    \toprule
    \multirow{2}*{Relation}  &
    \multicolumn{4}{c}{GPT-JT (6B) w/~ Retrieval}&
    \multirow{2}*{\gptthree{}(175B) w/o~Retrieval} \\
    \cmidrule(r){2-5}
     &  vanilla & + LOO & + reweight & + prune&\\
    \midrule
    average & 0.333 & 0.358 & \underline{0.380} & \textbf{0.392} & 0.339\\
    \hline
    applies to jurisdiction & 0.671 & 0.682 & 0.715 & \underline{0.719} & \textbf{0.745}\\
    author & 0.609 & 0.669 & \underline{0.690} & \textbf{0.714} & 0.369\\
    award received & 0.138 & 0.146 & \underline{0.162} & \textbf{0.166} & 0.114\\
    basic form of government & 0.113 & 0.115 & 0.126 & \underline{0.141} & \textbf{0.259}\\
    capital & 0.676 & 0.689 & \underline{0.707} & \textbf{0.714} & 0.656\\
    capital of & 0.364 & 0.384 & \underline{0.415} & \textbf{0.427} & 0.381\\
    composer & 0.151 & 0.196 & \underline{0.204} & \textbf{0.236} & 0.168\\
    continent & 0.674 & 0.703 & \underline{0.748} & \textbf{0.752} & 0.442\\
    country & 0.800 & 0.808 & \underline{0.837} & \textbf{0.842} & 0.661\\
    country of citizenship & 0.796 & 0.821 & \underline{0.837} & \textbf{0.841} & 0.647\\
    country of origin & 0.651 & 0.689 & \underline{0.714} & \textbf{0.718} & 0.444\\
    creator & 0.389 & 0.422 & \underline{0.443} & \textbf{0.464} & 0.241\\
    currency & 0.304 & 0.312 & 0.356 & \underline{0.366} & \textbf{0.559}\\
    developer & 0.391 & 0.428 & 0.453 & \underline{0.465} & \textbf{0.536}\\
    director & 0.321 & \underline{0.441} & 0.433 & \textbf{0.474} & 0.236\\
    discoverer or inventor & 0.343 & 0.381 & \underline{0.383} & \textbf{0.414} & 0.192\\
    drug or therapy used for treatment & 0.236 & 0.245 & 0.257 & \underline{0.260} & \textbf{0.437}\\
    educated at & 0.152 & 0.189 & \underline{0.202} & \textbf{0.212} & 0.114\\
    employer & 0.308 & 0.334 & \underline{0.371} & \textbf{0.380} & 0.238\\
    field of work & 0.347 & 0.366 & \underline{0.401} & \textbf{0.419} & 0.231\\
    genetic association & 0.053 & 0.051 & \underline{0.057} & \textbf{0.058} & 0.049\\
    genre & 0.165 & \underline{0.238} & 0.234 & \textbf{0.256} & 0.135\\
    has part & 0.005 & 0.009 & 0.013 & \underline{0.020} & \textbf{0.042}\\
    head of government & 0.228 & 0.231 & 0.272 & \underline{0.301} & \textbf{0.409}\\
    head of state & 0.283 & 0.288 & 0.324 & \underline{0.332} & \textbf{0.487}\\
    headquarters location & 0.496 & 0.543 & \underline{0.563} & \textbf{0.575} & 0.424\\
    industry & 0.218 & 0.229 & \underline{0.250} & \textbf{0.257} & 0.225\\
    influenced by & 0.069 & 0.094 & 0.088 & \underline{0.111} & \textbf{0.147}\\
    instance of & 0.430 & 0.463 & \underline{0.536} & \textbf{0.547} & 0.425\\
    instrument & 0.514 & 0.558 & \underline{0.619} & \textbf{0.631} & 0.246\\
    language of work or name & 0.794 & 0.804 & \underline{0.832} & \textbf{0.838} & 0.609\\
    languages spoken written or signed & 0.900 & 0.905 & \underline{0.916} & \textbf{0.917} & 0.444\\
    located in the administrative territorial entity & 0.147 & 0.153 & \underline{0.167} & \textbf{0.173} & 0.145\\
    location & 0.166 & 0.179 & 0.203 & \underline{0.208} & \textbf{0.349}\\
    location of discovery & 0.213 & 0.222 & \underline{0.245} & \textbf{0.257} & 0.186\\
    location of formation & 0.312 & \underline{0.370} & 0.364 & \textbf{0.379} & 0.360\\
    majority opinion by & 0.067 & 0.078 & 0.077 & \underline{0.091} & \textbf{0.196}\\
    manufacturer & 0.343 & 0.355 & \underline{0.377} & \textbf{0.385} & 0.329\\
    measured physical quantity & 0.334 & 0.380 & 0.400 & \underline{0.402} & \textbf{0.483}\\
    medical condition treated & 0.344 & 0.367 & 0.402 & \underline{0.407} & \textbf{0.440}\\
    member of & 0.133 & 0.143 & 0.173 & \underline{0.193} & \textbf{0.225}\\
    member of political party & 0.432 & 0.459 & \underline{0.483} & \textbf{0.491} & 0.326\\
    member of sports team & 0.230 & 0.309 & \underline{0.330} & \textbf{0.345} & 0.109\\
    movement & 0.187 & 0.195 & \underline{0.240} & \textbf{0.249} & 0.118\\
    named after & 0.199 & 0.216 & 0.249 & \underline{0.256} & \textbf{0.452}\\
    native language & 0.846 & 0.855 & \underline{0.865} & \textbf{0.866} & 0.627\\
    occupation & 0.521 & 0.553 & \underline{0.571} & \textbf{0.591} & 0.164\\
    office held by head of government & 0.106 & 0.111 & 0.121 & \underline{0.122} & \textbf{0.581}\\
    official language & 0.426 & 0.470 & 0.536 & \underline{0.558} & \textbf{0.691}\\
    operating system & 0.161 & 0.183 & 0.185 & \underline{0.206} & \textbf{0.228}\\
    original language of film or TV show & 0.614 & 0.687 & \underline{0.696} & \textbf{0.711} & 0.498\\
    original network & 0.323 & 0.389 & \underline{0.394} & \textbf{0.404} & 0.380\\
    owned by & 0.135 & 0.138 & 0.156 & \underline{0.171} & \textbf{0.409}\\
    part of & 0.156 & 0.180 & 0.181 & \underline{0.200} & \textbf{0.316}\\
    participating team & 0.183 & 0.188 & 0.198 & \underline{0.200} & \textbf{0.340}\\
    place of birth & 0.482 & 0.501 & \underline{0.533} & \textbf{0.536} & 0.171\\
    place of death & 0.383 & 0.407 & \underline{0.445} & \textbf{0.456} & 0.202\\
    position held & 0.214 & 0.236 & \underline{0.270} & \textbf{0.277} & 0.132\\
    position played on team & 0.464 & 0.517 & \underline{0.550} & \textbf{0.560} & 0.229\\
    programming language & 0.489 & 0.493 & 0.528 & \underline{0.532} & \textbf{0.642}\\
    recommended unit of measurement & 0.468 & 0.471 & 0.484 & \underline{0.486} & \textbf{0.740}\\
    record label & 0.102 & \underline{0.164} & 0.159 & \textbf{0.172} & 0.133\\
    religion & 0.486 & 0.490 & \underline{0.535} & \textbf{0.545} & 0.446\\
    shares border with & 0.004 & 0.014 & 0.012 & \underline{0.022} & \textbf{0.051}\\
    stock exchange & 0.528 & 0.563 & 0.614 & \underline{0.616} & \textbf{0.761}\\
    subclass of & 0.100 & 0.120 & 0.156 & \underline{0.183} & \textbf{0.201}\\
    subsidiary & 0.055 & 0.072 & 0.076 & \underline{0.094} & \textbf{0.166}\\
    symptoms and signs & 0.341 & 0.364 & \underline{0.407} & \textbf{0.409} & 0.284\\
    twinned administrative body & 0.000 & 0.001 & 0.001 & \textbf{0.015} & \underline{0.014}\\
    work location & 0.036 & 0.061 & 0.051 & \underline{0.084} & \textbf{0.324}\\
    \bottomrule
    \end{tabular}
\end{sc}
\end{center}
\vskip -0.1in
\end{table*}

\section{Full Results on Dirty Corpus}
\begin{table*}
\centering
\caption{Accuracy improvements for GPT-JT (6B) with retrieval augmentation on a noisy corpus.}
\label{dirtycorpus}
\begin{center}
\scriptsize

\begin{sc}
\begin{tabular}{lccccr}
\toprule
\multirow{2}*{Relation} &
\multirow{2}*{CLEAN CORPUS} &
\multicolumn{4}{c}{DIRTY CORPUS} \\ 
\cmidrule{3-6}
 & & vanilla & + LOO & + reweight & + prune\\
\midrule
average & \underline{0.333} & 0.270 & 0.311 & 0.330 & \textbf{0.335} \\
\hline
applies to jurisdiction & \textbf{0.671} & 0.606 & 0.648 & 0.651 & \underline{0.659} \\
author & 0.609 & 0.364 & 0.562 & \underline{0.610} & \textbf{0.614} \\
award received & \textbf{0.138} & 0.095 & 0.106 & 0.108 & \underline{0.116} \\
basic form of government & 0.113 & 0.113 & 0.121 & \underline{0.122} & \textbf{0.122} \\
capital & \textbf{0.676} & 0.571 & 0.622 & 0.663 & \underline{0.667} \\
capital of & \textbf{0.364} & 0.215 & 0.271 & 0.327 & \underline{0.335} \\
composer & 0.151 & 0.098 & 0.142 & \underline{0.180} & \textbf{0.180} \\
continent & 0.674 & \textbf{0.698} & \underline{0.697} & \textbf{0.698} & 0.697 \\
country & 0.800 & 0.696 & 0.778 & \underline{0.801} & \textbf{0.801} \\
country of citizenship & 0.796 & 0.744 & 0.777 & \underline{0.797} & \textbf{0.799} \\
country of origin & \textbf{0.651} & 0.575 & 0.618 & 0.642 & \underline{0.642} \\
creator & 0.389 & 0.245 & 0.352 & \underline{0.393} & \textbf{0.395} \\
currency & \textbf{0.304} & 0.277 & 0.291 & 0.299 & \underline{0.300} \\
developer & \textbf{0.391} & 0.283 & 0.350 & 0.376 & \underline{0.378} \\
director & 0.321 & 0.258 & 0.340 & \underline{0.423} & \textbf{0.423} \\
discoverer or inventor & \textbf{0.343} & 0.123 & 0.237 & 0.246 & \underline{0.275} \\
drug or therapy used for treatment & \textbf{0.236} & 0.187 & 0.217 & \underline{0.234} & 0.234 \\
educated at & 0.152 & 0.104 & 0.154 & \textbf{0.195} & \underline{0.195} \\
employer & \textbf{0.308} & 0.164 & 0.232 & 0.301 & \underline{0.304} \\
field of work & \textbf{0.347} & 0.288 & 0.310 & 0.320 & \underline{0.331} \\
genetic association & \textbf{0.053} & 0.017 & 0.032 & 0.043 & \underline{0.044} \\
genre & 0.165 & 0.181 & \textbf{0.193} & 0.184 & \underline{0.187} \\
has part & 0.005 & \textbf{0.009} & 0.008 & 0.008 & \underline{0.008} \\
head of government & \textbf{0.228} & 0.146 & 0.200 & 0.208 & \underline{0.213} \\
head of state & \textbf{0.283} & 0.196 & 0.235 & 0.251 & \underline{0.265} \\
headquarters location & \textbf{0.496} & 0.410 & 0.457 & 0.486 & \underline{0.490} \\
industry & \textbf{0.218} & 0.190 & 0.196 & 0.194 & \underline{0.209} \\
influenced by & 0.069 & 0.052 & 0.085 & \textbf{0.089} & \underline{0.089} \\
instance of & \underline{0.430} & 0.289 & 0.428 & 0.419 & \textbf{0.438} \\
instrument & 0.514 & 0.425 & 0.470 & \underline{0.525} & \textbf{0.529} \\
language of work or name & \textbf{0.794} & 0.748 & 0.776 & 0.756 & \underline{0.789} \\
languages spoken written or signed & \textbf{0.900} & 0.854 & 0.879 & 0.891 & \underline{0.894} \\
located in the administrative territorial entity & \textbf{0.147} & 0.109 & 0.133 & \underline{0.146} & 0.145 \\
location & \textbf{0.166} & 0.116 & 0.139 & 0.140 & \underline{0.148} \\
location of discovery & \textbf{0.213} & 0.119 & 0.157 & 0.194 & \underline{0.196} \\
location of formation & 0.312 & 0.245 & 0.289 & \underline{0.334} & \textbf{0.335} \\
majority opinion by & 0.067 & 0.067 & 0.071 & \textbf{0.073} & \underline{0.072} \\
manufacturer & \textbf{0.343} & 0.263 & 0.321 & 0.341 & \underline{0.341} \\
measured physical quantity & 0.334 & 0.341 & 0.367 & \underline{0.377} & \textbf{0.378} \\
medical condition treated & \textbf{0.344} & 0.253 & 0.306 & 0.307 & \underline{0.327} \\
member of & 0.133 & 0.131 & 0.145 & \textbf{0.145} & \underline{0.145} \\
member of political party & 0.432 & 0.360 & 0.424 & \underline{0.450} & \textbf{0.451} \\
member of sports team & 0.230 & 0.142 & 0.225 & \underline{0.231} & \textbf{0.234} \\
movement & \textbf{0.187} & 0.130 & 0.158 & 0.156 & \underline{0.177} \\
named after & \textbf{0.199} & 0.119 & 0.159 & 0.179 & \underline{0.183} \\
native language & \textbf{0.846} & 0.825 & 0.836 & 0.842 & \underline{0.846} \\
occupation & 0.521 & 0.428 & 0.465 & \underline{0.523} & \textbf{0.524} \\
office held by head of government & \textbf{0.106} & 0.099 & 0.097 & 0.100 & \underline{0.103} \\
official language & \textbf{0.426} & 0.315 & 0.406 & 0.396 & \underline{0.417} \\
operating system & 0.161 & 0.162 & \textbf{0.163} & 0.163 & \underline{0.163} \\
original language of film or TV show & 0.614 & 0.554 & 0.596 & \textbf{0.631} & \underline{0.631} \\
original network & 0.323 & 0.248 & 0.318 & \textbf{0.358} & \underline{0.358} \\
owned by & \textbf{0.135} & 0.093 & 0.115 & 0.127 & \underline{0.127} \\
part of & \textbf{0.156} & 0.124 & 0.133 & 0.142 & \underline{0.144} \\
participating team & 0.183 & 0.122 & 0.183 & \underline{0.190} & \textbf{0.191} \\
place of birth & \textbf{0.482} & 0.272 & 0.377 & 0.455 & \underline{0.460} \\
place of death & 0.383 & 0.269 & 0.324 & \underline{0.393} & \textbf{0.394} \\
position held & \textbf{0.214} & 0.175 & 0.184 & 0.200 & \underline{0.206} \\
position played on team & 0.464 & 0.398 & 0.444 & \underline{0.491} & \textbf{0.492} \\
programming language & \textbf{0.489} & 0.430 & 0.466 & 0.478 & \underline{0.478} \\
recommended unit of measurement & \textbf{0.468} & 0.385 & 0.418 & 0.429 & \underline{0.442} \\
record label & 0.102 & 0.084 & 0.112 & \underline{0.122} & \textbf{0.123} \\
religion & \textbf{0.486} & 0.438 & 0.461 & 0.466 & \underline{0.471} \\
shares border with & 0.004 & 0.007 & 0.008 & \underline{0.010} & \textbf{0.010} \\
stock exchange & \textbf{0.528} & 0.402 & 0.462 & 0.519 & \underline{0.519} \\
subclass of & 0.100 & 0.101 & 0.112 & \underline{0.113} & \textbf{0.117} \\
subsidiary & 0.055 & 0.057 & \textbf{0.059} & \underline{0.059} & 0.058 \\
symptoms and signs & \textbf{0.341} & 0.228 & 0.310 & 0.316 & \underline{0.333} \\
twinned administrative body & 0.000 & 0.002 & \underline{0.002} & \textbf{0.002} & 0.002 \\
work location & 0.036 & 0.060 & \textbf{0.066} & \underline{0.066} & 0.065 \\
\bottomrule
\end{tabular}
\end{sc}
\end{center}
\vskip -0.1in
\end{table*}

\section{Full Results on Additional Fabricated Data}
\begin{table*}
\centering
\caption{Accuracy impact of additional fabricated data sources for question answering on \texttt{Wikifact}.}
\label{fakewikireweighting}
\begin{center}
\tiny
\begin{sc}
\begin{tabular}{lccccc|c}
\toprule
\multirow{2}*{Relation}  &
\multirow{2}*{GPT-JT (6B) w/~ Retrieval} &
\multicolumn{4}{c}{GPT-JT (6B) w/~ Retrieval + Fabricated Data}&
\multirow{2}*{\gptthree{} (175B) w/o~Retrieval} \\ 
\cmidrule{3-6}
 & &  vanilla & +LOO & +reweight & +prune&\\
\midrule
average & 0.333 & 0.382 & 0.399 & \underline{0.410} & \textbf{0.418} & 0.339 \\
\hline
applies to jurisdiction & 0.671 & 0.693 & 0.692 & 0.714 & \underline{0.721} & \textbf{0.745} \\
author & 0.609 & 0.660 & 0.662 & \underline{0.691} & \textbf{0.707} & 0.369 \\
award received & 0.138 & 0.120 & 0.139 & \underline{0.162} & \textbf{0.168} & 0.114 \\
basic form of government & 0.113 & 0.139 & \underline{0.233} & 0.167 & 0.182 & \textbf{0.259} \\
capital & 0.676 & 0.707 & \underline{0.707} & 0.707 & \textbf{0.715} & 0.656 \\
capital of & 0.364 & 0.417 & 0.417 & \underline{0.436} & \textbf{0.442} & 0.381 \\
composer & 0.151 & 0.247 & 0.247 & \underline{0.259} & \textbf{0.274} & 0.168 \\
continent & 0.674 & 0.843 & 0.846 & \underline{0.854} & \textbf{0.855} & 0.442 \\
country & 0.800 & 0.821 & 0.819 & \underline{0.837} & \textbf{0.842} & 0.661 \\
country of citizenship & 0.796 & 0.802 & 0.823 & \underline{0.837} & \textbf{0.842} & 0.647 \\
country of origin & 0.651 & 0.677 & 0.681 & \underline{0.713} & \textbf{0.719} & 0.444 \\
creator & 0.389 & \underline{0.464} & 0.461 & 0.443 & \textbf{0.466} & 0.241 \\
currency & 0.304 & 0.272 & 0.305 & 0.355 & \underline{0.364} & \textbf{0.559} \\
developer & 0.391 & 0.472 & 0.475 & 0.499 & \underline{0.503} & \textbf{0.536} \\
director & 0.321 & 0.476 & 0.492 & \underline{0.497} & \textbf{0.503} & 0.236 \\
discoverer or inventor & 0.343 & 0.271 & 0.375 & \underline{0.378} & \textbf{0.406} & 0.192 \\
drug or therapy used for treatment & 0.236 & 0.306 & 0.308 & 0.318 & \underline{0.320} & \textbf{0.437} \\
educated at & 0.152 & \underline{0.210} & 0.207 & 0.202 & \textbf{0.213} & 0.114 \\
employer & 0.308 & 0.360 & 0.359 & \underline{0.371} & \textbf{0.381} & 0.238 \\
field of work & 0.347 & 0.274 & 0.367 & \underline{0.401} & \textbf{0.419} & 0.231 \\
genetic association & 0.053 & 0.053 & 0.052 & \underline{0.057} & \textbf{0.057} & 0.049 \\
genre & 0.165 & 0.212 & 0.234 & \underline{0.244} & \textbf{0.264} & 0.135 \\
has part & 0.005 & 0.015 & 0.014 & 0.016 & \underline{0.018} & \textbf{0.042} \\
head of government & 0.228 & 0.402 & 0.404 & \underline{0.416} & \textbf{0.422} & 0.409 \\
head of state & 0.283 & 0.437 & 0.439 & 0.448 & \underline{0.450} & \textbf{0.487} \\
headquarters location & 0.496 & 0.538 & 0.537 & \underline{0.566} & \textbf{0.576} & 0.424 \\
industry & 0.218 & 0.238 & 0.235 & \underline{0.250} & \textbf{0.259} & 0.225 \\
influenced by & 0.069 & 0.112 & 0.107 & 0.100 & \underline{0.112} & \textbf{0.147} \\
instance of & 0.430 & 0.388 & 0.459 & \underline{0.535} & \textbf{0.547} & 0.425 \\
instrument & 0.514 & 0.615 & \underline{0.621} & 0.618 & \textbf{0.628} & 0.246 \\
language of work or name & 0.794 & 0.809 & 0.808 & \underline{0.834} & \textbf{0.839} & 0.609 \\
languages spoken written or signed & 0.900 & \textbf{0.917} & \underline{0.917} & 0.916 & 0.917 & 0.444 \\
located in the administrative territorial entity & 0.147 & 0.153 & 0.154 & \underline{0.167} & \textbf{0.173} & 0.145 \\
location & 0.166 & 0.195 & 0.195 & 0.215 & \underline{0.221} & \textbf{0.349} \\
location of discovery & 0.213 & 0.198 & 0.215 & \underline{0.245} & \textbf{0.258} & 0.186 \\
location of formation & 0.312 & 0.386 & 0.385 & \underline{0.397} & \textbf{0.404} & 0.360 \\
majority opinion by & 0.067 & 0.111 & 0.128 & 0.125 & \underline{0.128} & \textbf{0.196} \\
manufacturer & 0.343 & 0.368 & 0.366 & \underline{0.377} & \textbf{0.385} & 0.329 \\
measured physical quantity & 0.334 & 0.550 & \textbf{0.584} & 0.562 & \underline{0.564} & 0.483 \\
medical condition treated & 0.344 & 0.383 & 0.400 & 0.402 & \underline{0.406} & \textbf{0.440} \\
member of & 0.133 & 0.190 & 0.191 & 0.203 & \underline{0.210} & \textbf{0.225} \\
member of political party & 0.432 & 0.478 & 0.478 & \underline{0.489} & \textbf{0.494} & 0.326 \\
member of sports team & 0.230 & 0.197 & 0.299 & \underline{0.330} & \textbf{0.341} & 0.109 \\
movement & 0.187 & 0.177 & 0.200 & \underline{0.240} & \textbf{0.250} & 0.118 \\
named after & 0.199 & 0.338 & 0.343 & 0.353 & \underline{0.358} & \textbf{0.452} \\
native language & 0.846 & 0.863 & 0.863 & \underline{0.867} & \textbf{0.868} & 0.627 \\
occupation & 0.521 & 0.543 & 0.552 & \underline{0.571} & \textbf{0.592} & 0.164 \\
office held by head of government & 0.106 & 0.137 & 0.137 & \underline{0.145} & 0.145 & \textbf{0.581} \\
official language & 0.426 & 0.701 & 0.711 & \underline{0.722} & \textbf{0.727} & 0.691 \\
operating system & 0.161 & 0.207 & \underline{0.214} & 0.185 & 0.206 & \textbf{0.228} \\
original language of film or TV show & 0.614 & 0.771 & 0.772 & \underline{0.779} & \textbf{0.781} & 0.498 \\
original network & 0.323 & 0.454 & 0.467 & \underline{0.483} & \textbf{0.485} & 0.380 \\
owned by & 0.135 & 0.195 & 0.192 & 0.201 & \underline{0.211} & \textbf{0.409} \\
part of & 0.156 & 0.192 & 0.199 & 0.199 & \underline{0.205} & \textbf{0.316} \\
participating team & 0.183 & 0.270 & 0.273 & 0.281 & \underline{0.281} & \textbf{0.340} \\
place of birth & 0.482 & 0.360 & 0.491 & \underline{0.533} & \textbf{0.537} & 0.171 \\
place of death & 0.383 & 0.335 & 0.396 & \underline{0.445} & \textbf{0.455} & 0.202 \\
position held & 0.214 & 0.241 & 0.244 & \underline{0.270} & \textbf{0.279} & 0.132 \\
position played on team & 0.464 & 0.435 & 0.511 & \underline{0.549} & \textbf{0.559} & 0.229 \\
programming language & 0.489 & 0.705 & 0.707 & \underline{0.717} & \textbf{0.720} & 0.642 \\
recommended unit of measurement & 0.468 & 0.464 & 0.466 & 0.484 & \underline{0.487} & \textbf{0.740} \\
record label & 0.102 & 0.220 & 0.228 & \underline{0.246} & \textbf{0.250} & 0.133 \\
religion & 0.486 & 0.535 & 0.535 & \underline{0.549} & \textbf{0.555} & 0.446 \\
shares border with & 0.004 & 0.010 & \underline{0.016} & 0.011 & 0.016 & \textbf{0.051} \\
stock exchange & 0.528 & 0.754 & 0.766 & \underline{0.773} & \textbf{0.774} & 0.761 \\
subclass of & 0.100 & 0.129 & 0.151 & 0.155 & \underline{0.186} & \textbf{0.201} \\
subsidiary & 0.055 & 0.087 & 0.100 & 0.096 & \underline{0.105} & \textbf{0.166} \\
symptoms and signs & 0.341 & 0.348 & 0.363 & \underline{0.406} & \textbf{0.410} & 0.284 \\
twinned administrative body & 0.000 & 0.003 & 0.005 & 0.002 & \underline{0.005} & \textbf{0.014} \\
work location & 0.036 & 0.082 & \underline{0.167} & 0.096 & 0.107 & \textbf{0.324} \\
\bottomrule
\end{tabular}
\end{sc}

\end{center}
\vskip -0.1in
\end{table*}

\section{Full Results on OpenAi Generator}
\begin{table*}
\centering
\caption{Accuracy for question answering using OpenAI on the 5 relations from \texttt{WikiFact}.}
\label{openaianswer}
\begin{center}
\scriptsize
\begin{sc}
\resizebox{\textwidth}{!}{
\begin{tabular}{lccccc|ccccc}
\toprule
\multirow{2}*{Relation}  &
\multirow{2}*{GPT-JT (6B)~w/o} &\multicolumn{4}{c}{GPT-JT (6B) w/~ Retrieval}&
\multirow{2}*{\gptthree{} (175B)~w/o} 
&\multicolumn{4}{c}{\gptthree{} w/~ Retrieval}
\\ 
\cmidrule{3-6}
\cmidrule{8-11}
 & &  vanilla & +LOO & +reweight & +prune& &  vanilla & +LOO & +reweight & +prune\\
\midrule
applies to jurisdiction & 0.430 & 0.673 & 0.681 & 0.715 & 0.719 & 0.745 & 0.718 & 0.734 & \underline{0.750} & \textbf{0.751}\\
author & 0.058 & 0.615 & 0.669 & 0.690 & 0.714 & 0.369 & 0.712 & 0.724 & \underline{0.741} & \textbf{0.749}\\
award received & 0.033 & 0.137 & 0.146 & 0.162 & 0.166 & 0.114 & 0.179 & 0.183 & \underline{0.187} & \textbf{0.189}\\
basic form of government & 0.173 & 0.114 & 0.115 & 0.126 & 0.141 & 0.259 & 0.342 & 0.362 & \underline{0.396} & \textbf{0.409}\\
capital & 0.515 & 0.679 & 0.689 & 0.707 & 0.714 & 0.656 & 0.756 & 0.756 & \underline{0.769} & \textbf{0.775}\\
\hline
average & 0.242 & 0.444 & 0.460 & 0.480 & 0.491 & 0.429 & 0.541 & 0.552 & \underline{0.569} & \textbf{0.575}\\
\bottomrule
\end{tabular}
}
\end{sc}

\end{center}
\vskip -0.1in
\end{table*}




\begin{thebibliography}{10}

\bibitem{alt2019fine}
Christoph Alt, Marc H{\"u}bner, and Leonhard Hennig.
\newblock Fine-tuning pre-trained transformer language models to distantly
  supervised relation extraction.
\newblock {\em arXiv preprint arXiv:1906.08646}, 2019.

\bibitem{bhaskar2022zero}
Adithya Bhaskar, Alexander~R Fabbri, and Greg Durrett.
\newblock Zero-shot opinion summarization with gpt-3.
\newblock {\em arXiv preprint arXiv:2211.15914}, 2022.

\bibitem{bommasani2021opportunities}
Rishi Bommasani, Drew~A Hudson, Ehsan Adeli, Russ Altman, Simran Arora, Sydney
  von Arx, Michael~S Bernstein, Jeannette Bohg, Antoine Bosselut, Emma
  Brunskill, et~al.
\newblock On the opportunities and risks of foundation models.
\newblock {\em arXiv preprint arXiv:2108.07258}, 2021.

\bibitem{chentvm2018}
Tianqi Chen, Thierry Moreau, Ziheng Jiang, Lianmin Zheng, Eddie Yan, Meghan
  Cowan, Haichen Shen, Leyuan Wang, Yuwei Hu, Luis Ceze, et~al.
\newblock Tvm: An automated end-to-end optimizing compiler for deep learning.
\newblock {\em OSDI}, 2018.

\bibitem{cothey2004web}
Viv Cothey.
\newblock Web-crawling reliability.
\newblock {\em Journal of the American Society for Information Science and
  Technology}, 55(14):1228--1238, 2004.

\bibitem{devlin2018bert}
Jacob Devlin, Ming-Wei Chang, Kenton Lee, and Kristina Toutanova.
\newblock Bert: Pre-training of deep bidirectional transformers for language
  understanding.
\newblock {\em arXiv preprint arXiv:1810.04805}, 2018.

\bibitem{frenay2013classification}
Beno{\^\i}t Fr{\'e}nay and Michel Verleysen.
\newblock Classification in the presence of label noise: a survey.
\newblock {\em IEEE transactions on neural networks and learning systems},
  25(5):845--869, 2013.

\bibitem{guu2020retrieval}
Kelvin Guu, Kenton Lee, Zora Tung, Panupong Pasupat, and Mingwei Chang.
\newblock Retrieval augmented language model pre-training.
\newblock In {\em International conference on machine learning}, pages
  3929--3938. PMLR, 2020.

\bibitem{hong2013computing}
Yili Hong.
\newblock On computing the distribution function for the poisson binomial
  distribution.
\newblock {\em Computational Statistics \& Data Analysis}, 59:41--51, 2013.

\bibitem{jia2019efficient}
Ruoxi Jia, David Dao, Boxin Wang, Frances~Ann Hubis, Nezihe~Merve Gurel, Bo~Li,
  Ce~Zhang, Costas~J Spanos, and Dawn Song.
\newblock Efficient task-specific data valuation for nearest neighbor
  algorithms.
\newblock {\em arXiv preprint arXiv:1908.08619}, 2019.

\bibitem{karlavs2022data}
Bojan Karla{\v{s}}, David Dao, Matteo Interlandi, Bo~Li, Sebastian Schelter,
  Wentao Wu, and Ce~Zhang.
\newblock Data debugging with shapley importance over end-to-end machine
  learning pipelines.
\newblock {\em arXiv preprint arXiv:2204.11131}, 2022.

\bibitem{karpukhin2020dense}
Vladimir Karpukhin, Barlas O{\u{g}}uz, Sewon Min, Patrick Lewis, Ledell Wu,
  Sergey Edunov, Danqi Chen, and Wen-tau Yih.
\newblock Dense passage retrieval for open-domain question answering.
\newblock {\em arXiv preprint arXiv:2004.04906}, 2020.

\bibitem{lewis2019bart}
Mike Lewis, Yinhan Liu, Naman Goyal, Marjan Ghazvininejad, Abdelrahman Mohamed,
  Omer Levy, Ves Stoyanov, and Luke Zettlemoyer.
\newblock Bart: Denoising sequence-to-sequence pre-training for natural
  language generation, translation, and comprehension.
\newblock {\em arXiv preprint arXiv:1910.13461}, 2019.

\bibitem{lewis2020retrieval}
Patrick Lewis, Ethan Perez, Aleksandra Piktus, Fabio Petroni, Vladimir
  Karpukhin, Naman Goyal, Heinrich K{\"u}ttler, Mike Lewis, Wen-tau Yih, Tim
  Rockt{\"a}schel, et~al.
\newblock Retrieval-augmented generation for knowledge-intensive nlp tasks.
\newblock {\em Advances in Neural Information Processing Systems},
  33:9459--9474, 2020.

\bibitem{liang2022holistic}
Percy Liang, Rishi Bommasani, Tony Lee, Dimitris Tsipras, Dilara Soylu,
  Michihiro Yasunaga, Yian Zhang, Deepak Narayanan, Yuhuai Wu, Ananya Kumar,
  et~al.
\newblock Holistic evaluation of language models.
\newblock {\em arXiv preprint arXiv:2211.09110}, 2022.

\bibitem{Bing}
{Microsoft}.
\newblock Bing web search api, 2023.

\bibitem{narayancan2022}
Avanika Narayan, Ines Chami, Laurel Orr, and Christopher R{\'e}.
\newblock Can foundation models wrangle your data?
\newblock {\em PVLDB}, 2022.

\bibitem{neumann2011efficiently}
Thomas Neumann.
\newblock Efficiently compiling efficient query plans for modern hardware.
\newblock {\em Proceedings of the VLDB Endowment}, 4(9):539--550, 2011.

\bibitem{openai}
{OpenAI}.
\newblock Models - openai, 2023.

\bibitem{radford2018improving}
Alec Radford, Karthik Narasimhan, Tim Salimans, Ilya Sutskever, et~al.
\newblock Improving language understanding by generative pre-training.
\newblock 2018.

\bibitem{2020t5}
Colin Raffel, Noam Shazeer, Adam Roberts, Katherine Lee, Sharan Narang, Michael
  Matena, Yanqi Zhou, Wei Li, and Peter~J. Liu.
\newblock Exploring the limits of transfer learning with a unified text-to-text
  transformer.
\newblock {\em Journal of Machine Learning Research}, 21(140):1--67, 2020.

\bibitem{cost2020}
Or~Sharir, Barak Peleg, and Yoav Shoham.
\newblock The cost of training nlp models: A concise overview.
\newblock 04 2020.

\bibitem{siriwardhana2022improving}
Shamane Siriwardhana, Rivindu Weerasekera, Elliott Wen, Tharindu Kaluarachchi,
  Rajib Rana, and Suranga Nanayakkara.
\newblock Improving the domain adaptation of retrieval augmented generation
  (rag) models for open domain question answering.
\newblock {\em arXiv preprint arXiv:2210.02627}, 2022.

\bibitem{song2022learning}
Hwanjun Song, Minseok Kim, Dongmin Park, Yooju Shin, and Jae-Gil Lee.
\newblock Learning from noisy labels with deep neural networks: A survey.
\newblock {\em IEEE Transactions on Neural Networks and Learning Systems},
  2022.

\bibitem{strubell2019energy}
Emma Strubell, Ananya Ganesh, and Andrew McCallum.
\newblock Energy and policy considerations for deep learning in nlp.
\newblock {\em ACL}, 2019.

\bibitem{tay2022unifying}
Yi~Tay, Mostafa Dehghani, Vinh~Q Tran, Xavier Garcia, Dara Bahri, Tal Schuster,
  Huaixiu~Steven Zheng, Neil Houlsby, and Donald Metzler.
\newblock Unifying language learning paradigms.
\newblock {\em arXiv preprint arXiv:2205.05131}, 2022.

\bibitem{tay2022transcending}
Yi~Tay, Jason Wei, Hyung~Won Chung, Vinh~Q Tran, David~R So, Siamak Shakeri,
  Xavier Garcia, Huaixiu~Steven Zheng, Jinfeng Rao, Aakanksha Chowdhery, et~al.
\newblock Transcending scaling laws with 0.1\% extra compute.
\newblock {\em arXiv preprint arXiv:2210.11399}, 2022.

\bibitem{enwiki:1140483446}
{Wikipedia contributors}.
\newblock Old rambling house --- {Wikipedia}{,} the free encyclopedia, 2023.
\newblock [Online; accessed 25-April-2023].

\bibitem{yuan2022decentralized}
Binhang Yuan, Yongjun He, Jared Davis, Tianyi Zhang, Tri Dao, Beidi Chen,
  Percy~S Liang, Christopher Re, and Ce~Zhang.
\newblock Decentralized training of foundation models in heterogeneous
  environments.
\newblock {\em Advances in Neural Information Processing Systems},
  35:25464--25477, 2022.

\bibitem{zamani2022retrieval}
Hamed Zamani, Fernando Diaz, Mostafa Dehghani, Donald Metzler, and Michael
  Bendersky.
\newblock Retrieval-enhanced machine learning.
\newblock {\em SIGIR}, 2022.

\end{thebibliography}
\end{document}